\documentclass[lettersize,journal]{IEEEtran}
\usepackage{amsmath,amsfonts}
\usepackage{algorithmic}
\usepackage{algorithm}
\usepackage{array}
\usepackage[caption=false,font=footnotesize,labelfont=sf,textfont=sf]{subfig}
\usepackage{textcomp}
\usepackage{stfloats}
\usepackage{url}
\usepackage{verbatim}
\usepackage{graphicx}
\usepackage{cite}

\usepackage{amssymb}
\usepackage{float}
\usepackage{multirow}
\usepackage{epsfig}
\usepackage{epstopdf}
\usepackage{makecell}
\usepackage{bm}
\usepackage{times}
\usepackage{booktabs}
\usepackage{microtype}
\usepackage{hyperref}
\usepackage{color}
\usepackage{diagbox}
\usepackage[english]{babel}
\usepackage{amsthm}

\theoremstyle{plain}
\newtheorem{proposition}{Proposition}
\begin{document}

\title{
Unified Graph Prompt Learning via Low-Rank Graph Message Prompting

}

\author{Beibei Wang, Bo Jiang*\thanks{* Corresponding author (e-mail: jiangbo@ahu.edu.cn).}, Ziyan Zhang, Jin Tang 
	\IEEEcompsocitemizethanks{\IEEEcompsocthanksitem{Beibei Wang, Ziyan Zhang and Jin Tang are with the School of Computer Science and Technology,  Anhui University, Hefei 230601, China.}
		\IEEEcompsocthanksitem{Bo Jiang is with
		State Key Laboratory of Opto-Electronic Information Acquisition and Protection Technology, School of Computer Science and Technology, Anhui University, Hefei 230601, China.
		}
	}
}

\markboth{Journal of \LaTeX\ Class Files,~Vol.~14, No.~8, August~2021}%
{Shell \MakeLowercase{\textit{et al.}}: A Sample Article Using IEEEtran.cls for IEEE Journals}

\IEEEpubid{0000--0000/00\$00.00~\copyright~2021 IEEE}

\maketitle

\begin{abstract} 
Graph Data Prompt (GDP), which introduces specific prompts in graph data for efficiently adapting pre-trained GNNs, has become a mainstream approach to graph fine-tuning learning problem. 
However, existing GDPs have been respectively designed  for distinct graph component (e.g., node features, edge features, edge weights) and thus operate within limited prompt spaces for
graph data. To the best of our knowledge, it still lacks a unified  prompter suitable for targeting all graph components simultaneously. 
To address this challenge,  in this paper, we 
first propose to reinterpret a wide range of existing GDPs from an aspect of Graph Message Prompt (GMP) paradigm. 
Based on GMP, we then introduce a novel graph prompt learning approach, termed Low-Rank GMP (LR-GMP), which leverages low-rank prompt representation to achieve an effective 
and compact graph prompt learning. 
Unlike traditional GDPs that target distinct graph components separately, LR-GMP concurrently performs prompting on all graph components in a unified manner, thereby achieving significantly superior generalization and robustness on diverse downstream tasks. 
Extensive experiments on several graph benchmark datasets demonstrate the effectiveness and advantages of our proposed LR-GMP.

\end{abstract}

\begin{IEEEkeywords}
Graph Neural Networks, Graph Prompt Tuning, Message Passing, Graph Message Prompt Learning. 
\end{IEEEkeywords}

\section{Introduction}
%
%
\IEEEPARstart{R}{ecently}, the ``pre-training, fine-tuning'' method has been widely adopted in GNN-based graph learning tasks~\cite{liu2025graph,adaptergnn24,G-adapter24,yang2025graphlora,gppt22,gpf24,EdgePrompt,graphtop25}.  It first trains GNN models on the pre-training dataset in a self-supervised manner and then adapts the pre-trained GNN models for various downstream tasks.  

Inspired by the great success of prompt tuning methods in Natural Language Processing~\cite{liu2023pre} and Computer Vision~\cite{xiao2025prompt},
graph prompt learning has emerged as a promising and parameter-efficient adaption paradigm~\cite{gpt_survey,gpt_survey2}. 
A mainstream approach is to design various Graph Data Prompts (GDPs) that introduce task-specific prompts into graph data based on node features~\cite{graphprompt23,gpf24,vgp}, edge features~\cite{EdgePrompt}, and edge weights~\cite{AAGOD,uniprompt,graphtop25}, as illustrated in Fig.~\ref{fig:GDPs}(a). 
For node feature prompt, it introduces prompt embeddings into the node feature space to better align the pre-trained GNN. 
For example, Fang et al.~\cite{gpf24} propose a simple graph feature prompt feature (GPF) method by adding learnable prompts to all input node features.
Li et al.~\cite{li2024instance} develop Instance-Aware Graph Prompt Learning (IA-GPL) that introduces distinct
prompts tailored to input node features.
%
Liu et al.~\cite{graphprompt23} present a novel prompting strategy that designs a task-specific prompt to guide representation learning in readout operation. 
For edge feature prompt, it aims to augment edge-level attribute information with learnable vectors. 
For example, Fu et al.~\cite{EdgePrompt} develop an effective edge prompt method (EdgePrompt) by adding learnable prompt vectors to edge features.
For edge weight prompt, it adapts or re-weights graph topology by modifying edge strengths. 
Guo et al.~\cite{AAGOD} propose to superimpose a parameterized amplifier matrix on the adjacency matrix of the original input graph for out-of-distribution detection. 
Huang et al.~\cite{uniprompt} present a new graph data prompting (UniPrompt) framework by imposing a topological prompt on the original graph structure.
Fu et al.~\cite{graphtop25} present a graph topology-oriented prompting (GraphTOP) method by reformulating topology-oriented prompt as an edge rewiring process.
%
%
\begin{figure*}[!ht]
\centering
\includegraphics[width=0.98\textwidth]{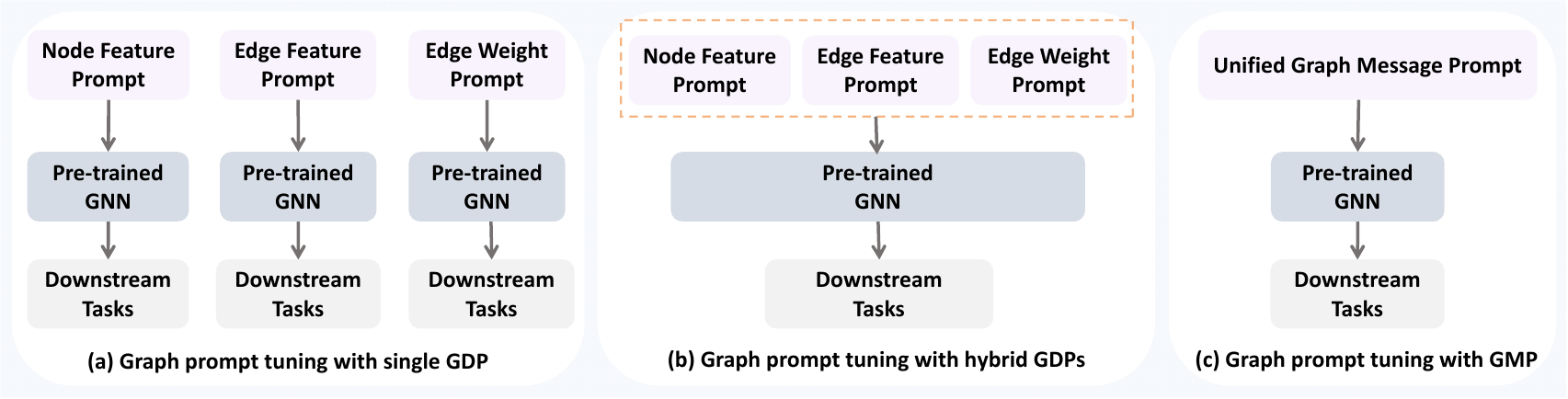}
\caption{Comparison between  Graph Data Prompts (GDPs) and our Graph Message Prompt (GMP).  (a) Using single GDP method to fine-tune pre-trained GNN model;
(b) Using hybrid prompt method that integrates different GDPs to fine-tune pre-trained GNN model; 
(c) Using unified graph message prompt method to fine-tune pre-trained GNN model.}
\label{fig:GDPs}
\end{figure*}
\IEEEpubidadjcol

After revisiting the above various Graph Data Prompts (GDPs), we observe that current approaches are tailored to specific graph components (e.g., node features, edge features or edge weights) and thus operate within limited prompt spaces for graph data. \textbf{Therefore, it is necessary to construct a universal framework capable of performing prompt learning across all graph components simultaneously}.   
Recent works~\cite{vgp,FPrompt} attempt to overcome this limitation by adopting hybrid prompting strategies, which integrate multiple types of GDPs to enhance the expressiveness of graph prompt tuning,
as illustrated in Fig.~\ref{fig:GDPs}(b). 
For example, Li et al.~\cite{FPrompt} integrate node feature prompts with structural constraint prompts to balance tasks performance and fairness objectives. Ai et al.~\cite{vgp} design a hybrid prompt by introducing different prompt generation modules for vision-based graph adaptation scenarios.
However, such hybrid prompt learning inevitably brings increased optimization burden and introduces additional design challenges in determining the optimal combination of different GDPs.
Therefore, this naturally motivates us to raise a question: \textit{Can we develop a unified graph prompting framework that inherently operates on all graph components simultaneously and constructs a rich, expressive prompt space for diverse downstream tasks?}

To address this problem, in this paper, we first revisit graph prompt learning from the perspective of GNN's message passing~\cite{gilmer2017neural,nguyen2026magprompt,zhanguga}. 
Our main observation is that existing Graph Data Prompts (GDPs) operating on graph data (node features, edge features, edge weights, etc.)  can be universally formulated as a kind of Graph Message Prompt (GMP) within GNN's message propagation process.  
GMP not only offers a new perspective to understand a wide range of existing GDPs, but also serves as a unified prompting framework for graph fine-tuning tasks. 
Moreover, based on GMP, we then introduce a novel graph prompt learning approach called Low-Rank GMP (LR-GMP), which leverages low-rank prompt representation to achieve an effective and compact graph prompt
learning model. 
Unlike traditional GDPs that target distinct
graph components separately, LR-GMP simultaneously performs
prompting on all graph components in a unified manner, thereby providing a rich and expressive prompt space 
for downstream graph learning tasks.
Extensive experiments on multiple benchmark datasets demonstrate the effectiveness, superior generalization, and robustness of our proposed LR-GMP framework against state-of-the-art graph prompting baselines. 

Overall, the main contributions of this paper are summarized as follows:
\begin{itemize}
    \item We propose revisiting various Graph Data Prompts (GDPs) from the perspective of the Graph Message Prompt (GMP) scheme by theoretically demonstrating that existing GDPs are special cases of GMP.  
    %
    \item We introduce LR-GMP, a novel graph prompt learning method that uses low-rank prompt representation to enable more effective prompting. It unifies prompting across all graph components, providing a richer and more expressive prompt space for downstream tasks.  
    
   \item We demonstrate the effectiveness, generalization and robustness of the proposed LR-GMP on various benchmark graph fine-tuning tasks. 
\end{itemize}

The remainder of this paper is organized as follows. In Section II, we  present some related work on graph pre-training and graph prompt learning. In Section III, we review the definition of graph prompt tuning and existing GDPs. 
In Section IV, we propose our GMP formulation and further develop LR-GMP scheme. We show that our method is capable of unifying various GDPs in Section V. Finally, we evaluate our method on diverse graph learning tasks in Section VI.

\section{Related Works}
\subsection{Graph Pre-training}
To alleviate the reliance on labeled supervision, substantial studies
have focused on developing self-supervised pre-training strategies for GNNs and enabling label-free graph representation learning. 
For example, 
GCC~\cite{qiu2020gcc} designs the pre-training objective as a subgraph-level instance discrimination task to learn generalizable structural representations across diverse graph instances.
GraphCL~\cite{graphcl} proposes a contrastive pre-training framework that encourages consistency between multiple augmented views derived from the same graph.
GraphMAE~\cite{GraphMAE} proposes a generative pre-training framework based on a masked graph autoencoder.
SGCL~\cite{SGCL} proposes a simplified self-supervised framework that learns graph representations through latent prediction without relying on negative samples.
GMCL~\cite{zhanguga} proposes a universal message  contrastive learning scheme to reformulate existing graph augmentation strategies.
SUGRL~\cite{mo2022simple} presents a simple and effective self-supervised framework to achieve effective and efficient graph contrastive learning.

\subsection{Graph Prompt Tuning}
Existing graph data prompt methods fall into the following categories: node feature prompt, edge feature prompt, edge weight prompt, subgraph prompt, or hybrid prompt. 
For example, GPF~\cite{gpf24} proposes to conduct prompting on node feature space by introducing learnable vectors. 
GPC~\cite{gpt_cluster} introduces a group of learnable prompts into graph-level representations during fine-tuning stage.
RELIEF~\cite{RELIEF} adopts a reinforcement learning framework to perform selective prompting on node features. 
EdgePrompt~\cite{EdgePrompt} attempts to introduce learnable prompts to edges for graph fine-tuning. 
AGOD~\cite{AAGOD} superimposes learnable amplifiers on adjacency matrices to highlight OOD-relevant substructures.
GraphTOP~\cite{graphtop25} proposes a graph topology-oriented prompting framework by modifying topology structure for graph learning tasks.
UniPrompt~\cite{uniprompt} also introduces a graph structure prompt to adjust edge weights for downstream
few-shot tasks.
All in One~\cite{all_in_one23} designs a novel prompting framework that integrates the prompted subgraph into original graph to allow flexible adaptation to downstream tasks.
MAGPrompt~\cite{nguyen2026magprompt} proposes a message-adaptive graph prompting method that performs message passing through a gating mechanism and a learnable additive prompt. 
FPrompt~\cite{FPrompt} introduces hybrid graph prompts to bridge the gap between pre-training and downstream tasks.
VGP~\cite{vgp} employs three different graph data prompts based on semantic prompt generation modules for vision GNN learning.

\section{Preliminaries}
\subsection{Problem Formulation}
\label{subsec:problem_formulation}

Let $\mathcal{G} = (\mathcal{V}, \mathcal{E}, \mathbf{E}, \mathbf{H}, \mathbf{A})$ be an input graph, where $\mathcal{V}$ and $\mathcal{E}$ denote the set of nodes and edges, respectively. $ \mathbf{E} \in \mathbb{R}^{|\mathcal{E}| \times d_\mathcal{E}} $ and $ \mathbf{H} \in \mathbb{R}^{N \times d_\mathcal{V}} $ represent the collection of edge features and node features, respectively, where $|\mathcal{E}|$ and $N$ represent the number of edges and nodes, $d_\mathcal{E}$ and $d_\mathcal{V}$ indicate the dimensions of the features. $ \mathbf{A} =[\mathbf{A}_{vu}]\in \{0,1\}^{N \times N}$ represents the adjacency matrix. 

\textbf{Graph Pre-training.} Given large-scale pre-training graph data $\mathcal{G}_{pre}$,  a graph neural network (GNN) backbone is pre-trained in a self-supervised way to learn transferable and generalizable representations. 
Then, the pre-training objective can generally be defined as
\begin{equation}
\min_{\Theta}\,\, \mathcal{L}_{pre}\big(\text{GNN}_{\Theta}(\mathcal{G}_{pre},\mathcal{D})\big),
\label{eq:pretraining_objective}
\end{equation}
where $\mathcal{D}$ represents the self-supervised signals derived from the distribution of unlabeled graph samples in the input graph space $\mathcal{G}_{pre}$ and $\Theta$ represents the collection of all trainable parameters of the GNN backbone.

\textbf{Graph Prompt Tuning.}
For an input downstream graph $\mathcal{G}$, 
graph prompt tuning aims to adapt the pretrained model $\text{GNN}_{\Theta^*}$ to the downstream task by introducing a small set of learnable prompt parameters $\Omega$. Formally, the prompt tuning objective can be formulated as
\begin{equation}
\min_{\Omega,\pi} \mathcal{L}_{down}\big(g_{\pi}\big(\text{GNN}_{\Theta^*}(\tilde{\mathcal{G}}), {Y} \big)\big)\,,\,\,\, s.t.\,\,\, \tilde{\mathcal{G}} = \mathcal{P}(\mathcal{G}, \Omega)
\label{eq:prompt_tuning_objective}
\end{equation}
where $Y$ represents the set of labeled data for model fine-tuning and $\tilde{\mathcal{G}}$ represents the prompted graph data. $g_{\pi}$ is a trainable task classifier parameterized by $\pi$.  $\mathcal{P}(\cdot)$ denotes the prompt function and $\Omega$ denotes the collection of prompt parameters defined on different graph components.
The key goal of prompt tuning is to design an effective and efficient prompt function $\mathcal{P}(\cdot)$ that enables {compact, adaptive and task-aware} graph prompting on the downstream tasks.

\subsection{Graph Data Prompts (GDPs)}
A mainstream direction in graph prompt tuning is to develop graph-specific prompt function that reformulates either the input graph or its latent representation on the downstream tasks.  Existing Graph Data Prompt methods (GDPs) can generally be divided into five categories, i.e., node feature prompt, edge feature/weight prompt, subgraph prompt, or hybrid prompt. In the following, we briefly review them, respectively. 

\textbf{Node feature prompt}  attempts to inject prompt embeddings into the node feature space to align the pre-trained GNN’s feature distribution with downstream tasks~\cite{graphprompt23,gpf24,RELIEF}.   
One simple and efficient way is to directly introduce a single learnable vector~\cite{graphprompt23,gpf24} to implement the shared feature prompt as follows: 
\begin{equation}\label{EQ:fpt1}
    \mathbf{\tilde{H}} = \mathcal{P}(\mathbf{H},\bm{z})= \mathbf{H} + \bm{1}\bm{z}^{\top}
\end{equation}
where $\bm{z}\in \mathbb{R}^{d_{\mathcal{V}}}$ is the learnable prompt vector and $\mathbf{\tilde{H}}$ denotes the prompted node features. $\bm{1}\in\mathbb{R}^{N}$ is an all-ones vector. 
Furthermore, to generate more expressive and discriminative prompted features, an effective way is to adopt multiple learnable vectors to improve the capacity of feature prompting method~\cite{gpf24,RELIEF} as follows:
\begin{align}\label{EQ:fpt2}
    &\mathbf{\tilde{H}} = \mathcal{P}(\mathbf{H},\mathbf{Z})= \mathbf{H} + \bm{\alpha}^{n}\mathbf{Z} \\
    &\mathrm{where} \,\,\, \bm{\alpha}^{n}=\text{Softmax}\big(\mathbf{H}\, {\mathbf{Z}}^{\top}/\tau\big)\nonumber
\end{align}
where $\mathbf{Z}\in \mathbb{R}^{k\times d_{\mathcal{V}}}$ contains $k$ bias prompt vectors and $\bm{\alpha}^{n}\in \mathbb{R}^{N\times k}$ indicates the assignment matrix that maps bias prompts to each node feature.

\textbf{Edge feature prompt}  extends the idea of feature modulation to the edge domain, which modifies the content of propagated messages by augmenting edge-level attribute information~\cite{EdgePrompt}. Specifically, edge feature prompt can be implemented by adding a learnable vector to edge information as follows:  
\begin{equation}\label{EQ:ept1}
    \mathbf{\tilde{E}} = \mathcal{P}(\mathbf{E},\bm{f})= \mathbf{E}+\bm{1}\bm{f}^{\top}
\end{equation}
where  $\bm{f}\in \mathbb{R}^{d_{\mathcal{E}}}$ denotes the learnable prompt vector and $\mathbf{\tilde{E}}$ denotes the prompted edge features. Besides, a more expressive variant~\cite{EdgePrompt} employs multiple prompts for edge adaptation as
\begin{align}\label{EQ:ept2}
    &\mathbf{\tilde{E}} = \mathcal{P}(\mathbf{E},\mathbf{F})=\mathbf{E}+\bm{\alpha}^{e}\mathbf{F} \\
    &\mathrm{where} \,\,\, \bm{\alpha}^{e}=\text{Softmax}\big(\mathbf{E}\, \mathbf{F}^{\top}/\tau\big)\nonumber
\end{align}
where $\mathbf{F}\in \mathbb{R}^{k\times d_{\mathcal{E}}}$ denotes $k$ bias prompt vectors for all edges and $\bm{\alpha}^{e}\in \mathbb{R}^{|\mathcal{E}|\times k}$ indicates the assignment matrix that maps bias prompts to each edge.

\textbf{Edge weight prompt}  aims to directly modulate the strength of propagated information along graph edges~\cite{graphtop25}. Formally, we can generate a prompted edge weight by introducing the learnable structural bias~\cite{AAGOD,uniprompt} as
\begin{equation}\label{EQ:ept3}
    \mathbf{\tilde{A}} = \mathcal{P}(\mathbf{A},\mathbf{S})= \mathbf{A}+\mathbf{S} 
\end{equation}
where $\mathbf{S}\in \mathbb{R}^{N\times N}$ encodes the edge weight prompts and $\mathbf{\tilde{A}}$ denotes the prompted adjacency matrix. 
In addition, one can also achieve the edge weight prompt by using learnable dynamic scaling~\cite{graphtop25} as follows:
\begin{equation}\label{EQ:ept4}
    \mathbf{\tilde{A}} = \mathcal{P}(\mathbf{A},\mathbf{S})= \mathbf{A}\odot \mathbf{S}
\end{equation}
where $\odot$ indicates the element-wise multiplication.

\textbf{Subgraph prompt} achieves a  structural graph prompt, which enhances the original graph by inserting an additional prompt graph~\cite{all_in_one23,vnt23,psp}. 
Given a prompt graph with $K$ nodes, we can generally define the subgraph prompt as follows:
\begin{equation}\label{eq:sub_p}
  \tilde{\mathcal{G}} = \mathcal{P}(\mathcal{G}, \mathcal{G}^{p})=\mathcal{G}\cup\mathcal{G}^{p},\,\,\,\,\mathcal{G}^{\,p} = (\mathcal{V}^{p}, \mathcal{E}^{p}, \mathbf{E}^{p}, \mathbf{H}^{p}, \mathbf{A}^{p})  
\end{equation}
where $\tilde{\mathcal{G}}$ is the prompted graph and $\mathcal{V}^{p}$ and $\mathcal{E}^{p}$ indicate the sets of prompt nodes and edges. 
$\mathbf{H}^{p}\in \mathbb{R}^{K\times d_{\mathcal{V}}}$ and $\mathbf{E}^{p}\in \mathbb{R}^{|\mathcal{E}^{p}|\times d_{\mathcal{E}}}$ denote the node and edge embeddings, respectively. $\mathbf{A}^{p}\in \mathbb{R}^{K\times K}$ encodes the link structure among these prompt nodes. 

\textbf{Hybrid prompt} attempts to integrate multiple GDPs into a single prompt tuning framework to adapt pre-trained model~\cite{FPrompt,vgp}. For example, FPrompt~\cite{FPrompt} introduces node-level and structural prompts to improve both the fairness and effectiveness of the pre-trained model. Inspired by this design in FPrompt, hybrid prompting can generally be formulated  as jointly applying prompts to node features and edge weights as follows:
\begin{equation}\label{EQ:hyb_p}
     \mathcal{G}(\mathbf{{E}}, \mathbf{\tilde{H}}, \mathbf{\tilde{A}}) = \mathcal{G}\big(\mathbf{{E}}, \mathcal{P}(\mathbf{H},\mathbf{Z}), \mathcal{P}(\mathbf{A},\mathbf{S})\big) 
\end{equation}
Here, $\mathcal{P}(\mathbf{H},\mathbf{Z})$ and $\mathcal{P}(\mathbf{A},\mathbf{S})$ are defined in Eq.(\ref{EQ:fpt2}) and Eq.(\ref{EQ:ept4}), respectively.  
Then, the prompted graph data is fed into the pre-trained backbone to perform downstream tasks.

\section{Methodology}
In this section, we first present a unified Graph Message Prompt (GMP) and further develop a graph message prompt learning schema (LR-GMP) that leverages low-rank prompt representations. Then, we present the process of our LR-GMP tuning  framework in Fig.~\ref{fig:GMPT}. 
\begin{figure*}[!ht]
\centering
\includegraphics[width=1\textwidth]{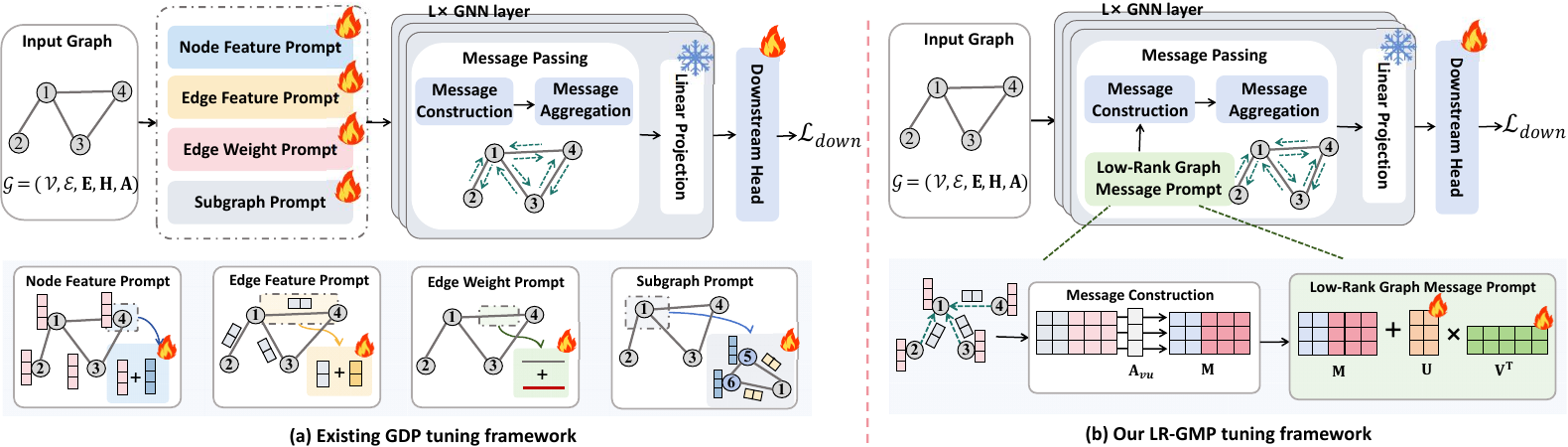}
\caption{Comparison of our proposed LR-GMP tuning framework and existing GDPs tuning. By jointly modulating node feature and structural information in the message space, LR-GMP unifies diverse prompting strategies into a single paradigm.}
\label{fig:GMPT}
\end{figure*}

\subsection{Graph Message Representation}
Given an input graph $\mathcal{G}(\mathcal{V}, \mathcal{E}, \mathbf{E}, \mathbf{H}, \mathbf{A})$, $\mathbf{E} \in \mathbb{R}^{|\mathcal{E}| \times d_\mathcal{E}}$ and $\mathbf{H} \in \mathbb{R}^{N \times d_\mathcal{V}}$ denote edge features and node features, and $\mathbf{A} \in \{0,1\}^{N \times N}$ denotes the adjacency matrix. GNNs generally learn graph representation via a multi-layer message passing mechanism~\cite{velivckovic2022message,zhanguga} that iteratively updates each node representation by aggregating the information from their neighbors. In general, 
the message passing process is formally  defined as
\begin{align}\label{eq:MC}
 \mathbf{H}'_{v} \leftarrow \mathcal{A}gg\Big( \mathbf{H}_v, \,\mathcal{M}(\mathbf{H}_u, \mathbf{E}_{vu}, \mathbf{A}_{vu}) \,\,|\,\, \forall u \in \mathcal{N}_v\Big)   
\end{align}
where $\mathcal{M}(\cdot)$ and $\mathcal{A}gg(\cdot)$ denote the message construction  and aggregation functions, respectively.  
For simplicity, let $\mathbf{M}_{v \leftarrow u} = \mathcal{M}(\mathbf{H}_{u}, \mathbf{E}_{vu}, \mathbf{A}_{vu})$. Then, we can rewrite Eq.(\ref{eq:MC}) as follows:
\begin{align}\label{eq:AGG}
 \mathbf{H}'_{v} \leftarrow \mathcal{A}gg\big( \mathbf{H}_v, \,\mathbf{M}_{v \leftarrow u} \,\,|\,\, \forall u \in \mathcal{N}_v\big)   
\end{align}
%


Without loss of generality, we assume that missing edge features are padded with zeros. This allows all message functions to be uniformly represented in concatenated form. 
Below, we briefly revisit the message function $\mathcal{M}(\cdot)$ in several popular GNN models. 
For example, MPNN~\cite{gilmer2017neural} adopts the simple summation aggregation and thus the message function is formulated as follows:
\begin{equation}\label{MSG1}
\mathbf{M}_{v \leftarrow u}=\mathcal{M}(\mathbf{H}_u, \mathbf{E}_{vu}, \mathbf{A}_{vu})= \mathbf{A}_{vu}[\mathbf{H}_u \Vert\mathbf{E}_{vu}] 
\end{equation}
where $ \,\|\, $ denotes concatenation. 
GCN~\cite{kipf2017semi} employs Laplacian-normalized edge weights to mitigate the scale discrepancy of neighbor information and thus the message function can be defined as
\begin{equation}\label{eq:MSG2}
\mathbf{M}_{v \leftarrow u}=\mathcal{M}(\mathbf{H}_u, \mathbf{E}_{vu}, \mathbf{A}_{vu}) = \mathbf{\hat{A}}_{vu} \, \mathbf{H}_u
\end{equation}
where $\mathbf{\hat{A}}_{vu}$
denotes the normalized edge weights. 
In GAT~\cite{velickovic2018graph}, it introduces a self-attention module to obtain learnable coefficients to re-weight the contribution of neighboring nodes. Then, the message function is defined as
\begin{equation}\label{eq:MSG3}
\mathbf{M}_{v \leftarrow u}=\mathcal{M}(\mathbf{H}_u, \mathbf{E}_{vu}, \mathbf{A}_{vu}) = \bm{\alpha}_{vu} \, \mathbf{H}_u
\end{equation}
where $\bm{\alpha}_{vu}$ denotes the  learned attention coefficient. 

\subsection{Graph Message Prompt Scheme}

Building upon the message representation in Eq.(\ref{eq:MC}), we can propose a more general graph data prompting strategy, termed Graph Message Prompt (GMP), which performs prompting on graph messages rather than on the original graph data. 
Specifically,  we define the message matrix as
$$
\mathbf{M} = \big\{\mathbf{M}_{v \leftarrow u}|\forall u \in \mathcal{N}_v,v=1\dots N\big\}\in \mathbb{R}^{|\mathcal{E}|\times (d_\mathcal{V}+d_\mathcal{E})}
$$
%
Then, GMP can generally be formulated as
\begin{equation}
\mathbf{\tilde{M}} = \mathcal{P}(\mathbf{M},\mathbf{P})
\end{equation}
where $\mathbf{P}$ denotes the learnable message prompt and 
$\mathcal{P}(\cdot)$ represents the prompt function, such as summation or element-wise multiplication. For instance, we can implement additive message prompting as follows:
\begin{equation}
\mathbf{\tilde{M}} = \mathcal{P}(\mathbf{M},\mathbf{P})=\mathbf{M}+\mathbf{P}
\label{eq:gmp}
\end{equation}

One key property of the above GMP is its ability to unify various Graph Data Prompts (GDPs), including node feature prompt (Eqs.(\ref{EQ:fpt1},\ref{EQ:fpt2})), edge feature prompt (Eqs.(\ref{EQ:ept1},\ref{EQ:ept2})), edge weight prompt (Eqs.(\ref{EQ:ept3},\ref{EQ:ept4})),  subgraph prompt (Eq.(\ref{eq:sub_p})) and hybrid prompt (Eq.(\ref{EQ:hyb_p})). In other words, existing GDPs can be viewed as specific instances of GMP by appropriately defining the prompt matrix $\mathbf{P}$. 
We will show this property in Section V in detail. Below, we present a novel specific graph message prompt method. 

\subsection{Low-Rank GMP}

As we know, the prompt component is typically encouraged to be low-rank~\cite{Lora,LoR-VP}. 
In the extreme case, the prompt can be formulated as $\mathbf{P}=\bm{1}^{\top}\bm{p}$  where $\bm{p}$ is a single vector~\cite{gpf24,EdgePrompt,nguyen2026magprompt}. 
In this case $rank(\mathbf{P})=1$.  
Based on the above observation, 
we propose Low-Rank GMP (LR-GMP) by employing low-rank prompt factorization on GMP.  
 Specifically, let $\mathbf{U}\in \mathbb{R}^{|\mathcal{E}|\times r}$ and $\mathbf{V}\in \mathbb{R}^{(d_\mathcal{V}+d_\mathcal{E}) \times r}$ denote low-rank prompt factors. Then, our LR-GMP function is defined as follows:
    \begin{equation}
    \mathbf{\tilde{M}} = \mathcal{P}_{r}(\mathbf{M},\,\mathbf{U},\,\mathbf{V}) = \mathbf{M} + \mathbf{U}\mathbf{V}^{\top}
    \label{eq:LR-GMP}
    \end{equation}
    where $r$ is the rank dimension to control the prompt capacity and $\mathcal{P}_{r}(\cdot)$ represents the low-rank prompt projection operator. 
In practice, we directly instantiate the low-rank factors $\mathbf{U}\in \mathbb{R}^{|\mathcal{E}|\times r}$ and $\mathbf{V}\in \mathbb{R}^{(d_\mathcal{V}+d_\mathcal{E}) \times r}$ in Eq.(\ref{eq:LR-GMP}) as two learnable parameter matrices. 
For inductive graph learning scenario, 
we can also learn the instance-specific low-rank factor $\mathbf{U}$ in Eq.(\ref{eq:LR-GMP}) based on the input graph message as follows:
    \begin{equation}
        \mathbf{U} = \phi(\mathbf{M}; \mathbf{W})
        \label{eq:dynamic_u}
    \end{equation}
where $\phi(\cdot)$ is a lightweight projection parameterized by $\mathbf{W}\in \mathbb{R}^{(d_\mathcal{V}+d_\mathcal{E})\times r}$. 
This design can be interpreted as a conditional low-rank adaptation mechanism, which extends the above low-rank prompt (Eq.(\ref{eq:LR-GMP})) into a more flexible adapter-based paradigm.

\section{Unifying GDPs from GMP}

In this section, we demonstrate that the effects of various GDPs can be represented as message-level offsets. This indicates that the proposed GMP is capable of unifying diverse GDPs, thereby enabling a rich and expressive prompt space. 
To be specific, 
we show that existing GDPs (including node feature prompt, edge feature prompt, edge weight prompt, subgraph prompt and hybrid prompt) can be equivalently represented via GMP. 
In the following proof, the message is defined as 
$$
\mathbf{M}_{v \leftarrow u} = \mathbf{A}_{vu}[\mathbf{H}_u \Vert \mathbf{E}_{vu}]
$$ 
We have the following Proposition 1-5. 

\begin{proposition}
For node feature prompt, there exists a special GMP that achieves the equivalent effect. 
\end{proposition}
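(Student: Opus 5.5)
The plan is to compute the message produced by the prompted node features and read off the offset from the original message. Under the message function fixed at the start of this section, $\mathbf{M}_{v\leftarrow u}=\mathbf{A}_{vu}[\mathbf{H}_u\Vert\mathbf{E}_{vu}]$, replacing $\mathbf{H}$ by $\tilde{\mathbf{H}}$ gives
\begin{equation*}
\tilde{\mathbf{M}}_{v\leftarrow u}=\mathbf{A}_{vu}[\tilde{\mathbf{H}}_u\Vert\mathbf{E}_{vu}]
=\mathbf{M}_{v\leftarrow u}+\mathbf{A}_{vu}[\tilde{\mathbf{H}}_u-\mathbf{H}_u\Vert\mathbf{0}].
\end{equation*}
This holds because concatenation and multiplication by the scalar $\mathbf{A}_{vu}$ are linear. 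I would therefore define the message prompt row-wise by $\mathbf{P}_{v\leftarrow u}=\mathbf{A}_{vu}[\tilde{\mathbf{H}}_u-\mathbf{H}_u\Vert\mathbf{0}]$. The additive GMP of Eq.(\ref{eq:gmp}) then reproduces exactly the messages of the node-feature-prompted graph.

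Next I would specialize this to the two node feature prompts. For the shared prompt in Eq.(\ref{EQ:fpt1}), $\tilde{\mathbf{H}}_u-\mathbf{H}_u=\bm{z}^{\top}$, so $\mathbf{P}_{v\leftarrow u}=\mathbf{A}_{vu}[\bm{z}^{\top}\Vert\mathbf{0}]$. Since messages are indexed by edges, where $\mathbf{A}_{vu}=1$, this gives $\mathbf{P}=\bm{1}[\bm{z}^{\top}\Vert\mathbf{0}]$. This is a rank-one prompt, and it is an instance of LR-GMP (Eq.(\ref{eq:LR-GMP})) with $r=1$, $\mathbf{U}=\bm{1}$ and $\mathbf{V}=[\bm{z};\mathbf{0}]$.

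For the multi-prompt form in Eq.(\ref{EQ:fpt2}), $\tilde{\mathbf{H}}_u-\mathbf{H}_u=\bm{\alpha}^{n}_u\mathbf{Z}$. Let $\mathbf{S}\in\{0,1\}^{|\mathcal{E}|\times N}$ be the source-selection matrix sending edge $(v,u)$ to node $u$. Then $\mathbf{P}=\mathbf{S}\bm{\alpha}^{n}[\mathbf{Z}\Vert\mathbf{0}]$. This has rank at most $k$ and fits the LR-GMP form with $\mathbf{U}=\mathbf{S}\bm{\alpha}^{n}$ and $\mathbf{V}^{\top}=[\mathbf{Z}\Vert\mathbf{0}]$. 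Because $\bm{\alpha}^{n}$ is a function of $\mathbf{H}$, and hence of the messages, this choice of $\mathbf{U}$ is also of the conditional form in Eq.(\ref{eq:dynamic_u}).

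The step I expect to be the main obstacle is interpreting ``equivalent effect.'' The node feature $\mathbf{H}_v$ also enters the aggregation $\mathcal{A}gg$ in Eq.(\ref{eq:AGG}) directly, not only through messages. I would handle this by including self-loops, so that the self term is itself a message $\mathbf{M}_{v\leftarrow v}$ and receives the same offset. I would also state explicitly that the equivalence concerns the first propagation layer, which is where input-level prompts act. Deeper layers then coincide automatically, since they see identical aggregated representations.
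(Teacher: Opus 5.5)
Your proposal is correct and follows essentially the same route as the paper: both use the linearity of $\mathbf{A}_{vu}[\cdot\Vert\cdot]$ to read off the message offset $\mathbf{A}_{vu}[\tilde{\mathbf{H}}_u-\mathbf{H}_u\Vert\mathbf{0}]$, then specialize it to $\mathbf{A}_{vu}[\bm{z}\Vert\mathbf{0}]$ and $\mathbf{A}_{vu}[\sum_{j}\bm{\alpha}^{n}_{uj}\mathbf{Z}_j\Vert\mathbf{0}]$. Your extras go beyond what the paper writes: the explicit LR-GMP factorizations (with $r=1$ and $r=k$), and the self-loop convention, which makes the direct $\mathbf{H}_v$ term in $\mathcal{A}gg$ receive the same offset and so fills a point the paper's proof leaves implicit.
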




\begin{proof}
 As shown in Eq.(\ref{EQ:fpt1}), 
 conducting node feature prompting with a single prompt vector $\bm{z} \in \mathbb{R}^{ d_\mathcal{V}} $ is expressed as $\mathbf{\tilde{H}} = \mathbf{H} + \bm{z}$. 
It is the same as conducting prompt learning in graph message space as
\begin{equation}\label{eq:P1-1}
    \begin{aligned}
        \mathbf{\tilde{M}}_{v\leftarrow u} 
        &= \mathbf{A}_{vu} [\mathbf{H}_u \Vert \mathbf{E}_{vu}] + \mathbf{A}_{vu} [\bm{z} \Vert \mathbf{0}]\\
       & =\mathbf{{M}}_{v\leftarrow u}+\mathbf{A}_{vu}[\bm{z} \Vert \mathbf{0}]
    \end{aligned}
\end{equation} 
 %
    That is, it is 
equivalent to conducting
GMP (Eq.(\ref{eq:gmp})) by defining  the message prompt as 
\begin{equation}
\mathbf{P}_{v\leftarrow u} = \mathbf{A}_{vu} [\bm{z}\Vert \mathbf{0}]
\end{equation} 
    
 As shown in Eq.(\ref{EQ:fpt2}), given multiple basis prompt $\mathbf{Z}\in \mathbb{R}^{k\times d_{\mathcal{V}}}$, the prompted node feature is expressed as $\mathbf{\tilde{H}}_u = \mathbf{H}_u + \sum_{j}^{k}\bm{\alpha}_{u,j}^{n}\mathbf{Z}_{j}$. Similarly, this is  equivalent to conducting 
GMP by defining  the message prompt 
as 
    \begin{equation}
    \mathbf{P}_{v\leftarrow u} = \mathbf{A}_{vu} \big[\sum_{j=1}^{k}\bm{\alpha}_{uj}^{n}\mathbf{Z}_{j} \ \Vert \mathbf{0}\big]
    \end{equation} 
    where $\bm{\alpha}_{u,j}^{n}$ represents the learnable coefficient for the $j$-th basis node prompt associated with node $u$.  
\end{proof}

\begin{proposition}
For edge feature prompt, there exists a special GMP that achieves the equivalent effect. 
\end{proposition}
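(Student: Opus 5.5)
The plan mirrors the proof of Proposition 1, but places the offset in the edge-feature block of the message. We work with the message $\mathbf{M}_{v\leftarrow u}=\mathbf{A}_{vu}[\mathbf{H}_u\Vert\mathbf{E}_{vu}]$ fixed at the start of this section. The one structural fact we use is linearity of this message in its concatenated argument: for any $\bm{a}\in\mathbb{R}^{d_\mathcal{V}}$ and $\bm{b}\in\mathbb{R}^{d_\mathcal{E}}$,
\begin{equation*}
\mathbf{A}_{vu}[\mathbf{H}_u+\bm{a}\Vert\mathbf{E}_{vu}+\bm{b}]=\mathbf{A}_{vu}[\mathbf{H}_u\Vert\mathbf{E}_{vu}]+\mathbf{A}_{vu}[\bm{a}\Vert\bm{b}].
\end{equation*}

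\textbf{Single-prompt case.} By Eq.(\ref{EQ:ept1}), each prompted edge feature is $\mathbf{\tilde{E}}_{vu}=\mathbf{E}_{vu}+\bm{f}$. Substituting into the message function and applying the identity with $\bm{a}=\mathbf{0}$, $\bm{b}=\bm{f}$ gives
\begin{equation*}
\mathbf{\tilde{M}}_{v\leftarrow u}=\mathbf{M}_{v\leftarrow u}+\mathbf{A}_{vu}[\mathbf{0}\Vert\bm{f}].
\end{equation*}
This is exactly the additive GMP of Eq.(\ref{eq:gmp}) with the message prompt $\mathbf{P}_{v\leftarrow u}=\mathbf{A}_{vu}[\mathbf{0}\Vert\bm{f}]$.

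\textbf{Multi-prompt case.} By Eq.(\ref{EQ:ept2}), the prompted edge feature is $\mathbf{\tilde{E}}_{vu}=\mathbf{E}_{vu}+\sum_{j=1}^{k}\bm{\alpha}^{e}_{(vu),j}\mathbf{F}_j$, where $(vu)$ is the row index of edge $(v,u)$ in $\bm{\alpha}^{e}$. The same substitution yields the prompt
\begin{equation*}
\mathbf{P}_{v\leftarrow u}=\mathbf{A}_{vu}\Big[\mathbf{0}\,\Big\Vert\,\sum_{j=1}^{k}\bm{\alpha}^{e}_{(vu),j}\mathbf{F}_j\Big].
\end{equation*}
The coefficients $\bm{\alpha}^{e}$ depend on $\mathbf{E}$ and $\mathbf{F}$ but not on the message. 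The resulting $\mathbf{P}$ is therefore a legitimate, possibly input-dependent, message prompt, consistent with the instance-specific form in Eq.(\ref{eq:dynamic_u}).

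\textbf{Main obstacle.} The only real subtlety is the indexing: the edge-feature prompt lives on rows of $\mathbf{E}$ indexed by edges, whereas GMP rows are indexed by messages $v\leftarrow u$. We therefore need the correspondence between each message and its edge row $\mathbf{E}_{vu}$ to be one-to-one. For undirected graphs stored with both directions, we take both directed copies to carry the same $\mathbf{E}_{vu}$ and hence the same offset. For missing features, the zero-padding convention already adopted in the paper makes the construction well defined. Beyond this bookkeeping, the argument is a direct substitution, exactly as in Proposition 1.
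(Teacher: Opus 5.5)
Your proposal is correct and follows the paper's proof essentially verbatim: substituting $\mathbf{\tilde{E}}_{vu}$ into $\mathbf{M}_{v\leftarrow u}=\mathbf{A}_{vu}[\mathbf{H}_u\Vert\mathbf{E}_{vu}]$ and splitting by linearity gives $\mathbf{P}_{v\leftarrow u}=\mathbf{A}_{vu}[\mathbf{0}\Vert\bm{f}]$ in the single-prompt case and $\mathbf{P}_{v\leftarrow u}=\mathbf{A}_{vu}[\mathbf{0}\Vert\sum_{j}\bm{\alpha}^{e}_{vu,j}\mathbf{F}_j]$ in the multi-prompt case, exactly as in the paper. Two harmless slips in your side remarks: the bookkeeping only needs each message to map to a well-defined edge row, so injectivity is not required; and $\bm{\alpha}^{e}_{vu}$ does depend on the message through its edge block $\mathbf{E}_{vu}$, which is fine because the paper's GMP already admits input-dependent prompts such as $\mathbf{S}_{vu}[\mathbf{H}_u\Vert\mathbf{E}_{vu}]$ in the edge-weight case.
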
 
\begin{proof}
As shown in Eq.(\ref{EQ:ept1}), edge feature prompt injects a learnable vector $\bm{f} \in \mathbb{R}^{ d_\mathcal{E}} $ into edge feature which is formulated as $\mathbf{\tilde{E}}_{v\leftarrow u} = \mathbf{E}_{vu} + \mathbf{f}$.  
 It is equivalent to conducting prompt learning in message space as  \begin{equation}\label{eq:P2-1}
    \begin{aligned}   \mathbf{\tilde{M}}_{v\leftarrow u} &= \mathbf{A}_{vu} [\mathbf{H}_u \Vert \mathbf{E}_{vu}] + \mathbf{A}_{vu}[\mathbf{0} \Vert \bm{f} ]\\
        & =\mathbf{{M}}_{v\leftarrow u}+\mathbf{A}_{vu} [\mathbf{0} \Vert \bm{f} ]
    \end{aligned}
\end{equation} 
    This is equivalent to conducting GMP (Eq.(\ref{eq:gmp})) by 
    defining the message prompt  as 
    \begin{equation}
    \mathbf{P}_{v\leftarrow u} = \mathbf{A}_{vu}  [\mathbf{0} \Vert \bm{f}]
    \end{equation} 
    
    As shown in Eq.(\ref{EQ:ept2}), given multiple prompt $\mathbf{F}\in \mathbb{R}^{k\times d_{\mathcal{E}}}$, the prompted edge feature is defined as $\mathbf{\tilde{E}}_{vu} = \mathbf{E}_{vu} +  \sum_{j}^{k}\bm{\alpha}_{vu,j}^{e}\mathbf{F}_{j}$. From Eq.(\ref{eq:P2-1}), this is mathematically equivalent to conducting GMP by defining the message prompt as  
    \begin{equation}
    \mathbf{P}_{v \leftarrow u} = \mathbf{A}_{vu}\big[\mathbf{0} \Vert \sum_{j=1}^{k}\bm{\alpha}_{vu,j}^{e}\mathbf{F}_{j}\big]
    \end{equation} 
    where $\bm{\alpha}_{vu,j}^{e}$ represents the learnable coefficient for the $j$-th edge prompt associated with edge from node
    $ v$ to $u$.
\end{proof}

\begin{proposition}
For edge weight prompt, there exists a special GMP that achieves the equivalent effect. 
\end{proposition}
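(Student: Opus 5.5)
We follow the same template as Propositions 1 and 2: write down the prompted message obtained when the prompted adjacency $\mathbf{\tilde{A}}$ replaces $\mathbf{A}$ in the message function, and read off the difference $\mathbf{\tilde{M}}_{v\leftarrow u}-\mathbf{M}_{v\leftarrow u}$ as the message prompt $\mathbf{P}_{v\leftarrow u}$ in Eq.(\ref{eq:gmp}). Since the message is linear in the scalar weight $\mathbf{A}_{vu}$, both the additive form Eq.(\ref{EQ:ept3}) and the multiplicative form Eq.(\ref{EQ:ept4}) reduce to a scalar offset multiplying the fixed vector $[\mathbf{H}_u\Vert\mathbf{E}_{vu}]$.

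\emph{Additive case.} With $\mathbf{\tilde{A}}=\mathbf{A}+\mathbf{S}$ we get
\begin{equation*}
\mathbf{\tilde{M}}_{v\leftarrow u}=(\mathbf{A}_{vu}+\mathbf{S}_{vu})[\mathbf{H}_u\Vert\mathbf{E}_{vu}]=\mathbf{M}_{v\leftarrow u}+\mathbf{S}_{vu}[\mathbf{H}_u\Vert\mathbf{E}_{vu}],
\end{equation*}
so we define $\mathbf{P}_{v\leftarrow u}=\mathbf{S}_{vu}[\mathbf{H}_u\Vert\mathbf{E}_{vu}]$.

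\emph{Multiplicative case.} With $\mathbf{\tilde{A}}=\mathbf{A}\odot\mathbf{S}$ we get
\begin{equation*}
\mathbf{\tilde{M}}_{v\leftarrow u}=\mathbf{A}_{vu}\mathbf{S}_{vu}[\mathbf{H}_u\Vert\mathbf{E}_{vu}]=\mathbf{M}_{v\leftarrow u}+(\mathbf{S}_{vu}-1)\mathbf{A}_{vu}[\mathbf{H}_u\Vert\mathbf{E}_{vu}],
\end{equation*}
so we define $\mathbf{P}_{v\leftarrow u}=(\mathbf{S}_{vu}-1)\mathbf{M}_{v\leftarrow u}$. Equivalently, this case is an instance of the element-wise multiplicative form of $\mathcal{P}(\mathbf{M},\mathbf{P})$ with $\mathbf{P}_{v\leftarrow u}=\mathbf{S}_{vu}\bm{1}$.

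The main obstacle is the support of the prompt in the additive case. If $\mathbf{S}$ is nonzero on pairs $(v,u)$ with $\mathbf{A}_{vu}=0$, the prompt creates new edges, while the message matrix $\mathbf{M}$ is indexed only by $\mathcal{E}$. We would handle this in one of two ways. The first is to restrict attention to prompts supported on existing edges, which is the setting of re-weighting. The second is to extend the message index set to $\mathcal{E}\cup\operatorname{supp}(\mathbf{S})$, treating absent messages as zero rows whose prompt is $\mathbf{S}_{vu}[\mathbf{H}_u\Vert\mathbf{0}]$, with edge features padded by zero as the paper assumes. This padding convention is what also makes the subgraph prompt of Eq.(\ref{eq:sub_p}) fit the same pattern later.

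Finally, we would remark that in both cases the prompt is a per-edge scalar times the fixed vector $[\mathbf{H}_u\Vert\mathbf{E}_{vu}]$. Each row of $\mathbf{P}$ therefore lies in the span of the original message, so edge weight prompting occupies a structured subspace of the general GMP space. This supports the claim that GMP strictly contains the edge weight prompt.
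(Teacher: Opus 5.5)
Your proposal is correct and matches the paper's proof: the additive prompt $\mathbf{S}_{vu}[\mathbf{H}_u\Vert\mathbf{E}_{vu}]$ is identical, and your multiplicative prompt $(\mathbf{S}_{vu}-1)\mathbf{M}_{v\leftarrow u}$ is exactly the paper's $(\mathbf{\tilde{A}}_{vu}-\mathbf{A}_{vu})[\mathbf{H}_u\Vert\mathbf{E}_{vu}]$, since $\mathbf{\tilde{A}}_{vu}=\mathbf{A}_{vu}\mathbf{S}_{vu}$. Your caveat about $\mathbf{S}$ being nonzero off $\mathcal{E}$ is a real point the paper passes over, because its $\mathbf{M}$ has only $|\mathcal{E}|$ rows; however, on such added pairs the prompt row $\mathbf{S}_{vu}[\mathbf{H}_u\Vert\mathbf{0}]$ lies in the span of $[\mathbf{H}_u\Vert\mathbf{0}]$, not of the (zero) original message, so your closing span remark should be stated that way.
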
 
\begin{proof}
    As shown in Eqs.(\ref{EQ:ept3},\ref{EQ:ept4}), edge weight prompt can adopt an additive bias or multiplicative scaling approach to modulate structural connectivity strength during message construction.
    For the additive strategy in Eq.(\ref{EQ:ept3}), given the edge weight prompt $ \mathbf{S} \in \mathbb{R}^{N\times N} $, it is the same as conducting prompt learning in graph message space as follows:
    \begin{equation}\label{eq:P3-1}
         \begin{aligned}
       \mathbf{\tilde{M}}_{v\leftarrow u}& = (\mathbf{A}_{vu}+\mathbf{S}_{vu}) [\mathbf{H}_u \Vert \mathbf{E}_{vu}]  \\
       &=\mathbf{{M}}_{v\leftarrow u}+\mathbf{S}_{vu}[\mathbf{H}_u \Vert \mathbf{E}_{vu}]   
    \end{aligned}
    \end{equation}
   This is equivalent to conducting GMP (Eq.(\ref{eq:gmp})) by defining the message prompt as 
   \begin{equation}
   \mathbf{P}_{v \leftarrow u} = \mathbf{S}_{vu}[\mathbf{H}_u \Vert \mathbf{E}_{vu}]
   \end{equation}
   Similarly, the multiplicative scaling prompt in Eq.(\ref{EQ:ept4}) is equivalent to performing prompt in graph message space as 
    \begin{equation}\label{eq:P3-2}
        \begin{aligned}
          \mathbf{\tilde{M}}_{v\leftarrow u}& = \mathbf{A}_{vu}\mathbf{S}_{vu}[\mathbf{H}_u \Vert \mathbf{E}_{vu}]  \\
       &=  \mathbf{{M}}_{v\leftarrow u} + (\mathbf{\tilde{A}}_{vu} - \mathbf{A}_{vu})[\mathbf{H}_{u}\Vert\mathbf{E}_{vu}]  
        \end{aligned}
    \end{equation}
    The above Eq.(\ref{eq:P3-2}) can be equivalent to conducting GMP  by defining the message prompt as
    \begin{equation}
    \mathbf{P}_{v\leftarrow u}=(\mathbf{\tilde{A}}_{vu} - \mathbf{A}_{vu}) [\mathbf{H}_{u}\Vert\mathbf{E}_{vu}]
    \end{equation}
    where $\mathbf{\tilde{A}}_{vu}$ is the prompted edge weight by the multiplicative prompting mechanism of $\mathbf{\tilde{A}}= \mathbf{A}\odot\mathbf{S}$.
   
\end{proof}

\begin{proposition}
\label{prop:subgraph}
For subgraph prompt $\mathcal{G}^p=(\mathcal{V}^p, \mathcal{E}^p)$ applied to the original graph $\mathcal{G}$, there exists a special GMP that achieves an approximate effect based on the message aggregation.
\end{proposition}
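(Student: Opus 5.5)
The plan is to write the prompted graph $\tilde{\mathcal{G}}=\mathcal{G}\cup\mathcal{G}^p$ explicitly in terms of the message passing in Eq.(\ref{eq:AGG}). Then, for every original node $v\in\mathcal{V}$, I will separate the aggregated quantity into two parts: messages from original neighbors $u\in\mathcal{N}_v$, and messages from prompt nodes $p\in\mathcal{V}^p$ attached to $v$ through inserted cross edges.

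\textbf{Setting up the decomposition.} Let $\mathbf{A}^{c}\in\mathbb{R}^{N\times K}$ denote the cross links between $\mathcal{G}$ and $\mathcal{G}^p$, with features $\mathbf{E}^{c}$. Under summation aggregation, the prompted update for $v$ is
$$\mathbf{H}'_v=\mathcal{A}gg\Big(\mathbf{H}_v,\ \sum_{u\in\mathcal{N}_v}\mathbf{M}_{v\leftarrow u}+\sum_{p\in\mathcal{V}^p}\mathbf{A}^{c}_{vp}[\mathbf{H}^p_p\Vert\mathbf{E}^{c}_{vp}]\Big).$$
The second sum does not depend on any original neighbor. I will define the node-level offset $\bm{\delta}_v=\sum_{p}\mathbf{A}^{c}_{vp}[\mathbf{H}^p_p\Vert\mathbf{E}^{c}_{vp}]$ and spread it over the incoming edges of $v$. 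Concretely, set
$$\mathbf{P}_{v\leftarrow u}=\frac{\mathbf{A}_{vu}}{d_v}\,\bm{\delta}_v,\qquad d_v=\sum_u\mathbf{A}_{vu}.$$
Then $\sum_u\tilde{\mathbf{M}}_{v\leftarrow u}=\sum_u\mathbf{M}_{v\leftarrow u}+\bm{\delta}_v$, so the additive GMP in Eq.(\ref{eq:gmp}) reproduces the aggregated input to $v$ exactly. For isolated nodes, I would attach the offset to a self-loop message. This is the same move as Propositions 1--3, except the offset now lives at the level of the aggregated message rather than each individual message. That is why the statement only claims equivalence ``based on the message aggregation.''

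\textbf{Why only approximate.} There are three reasons the equivalence is approximate rather than exact.
\begin{itemize}
\item Prompt nodes carry their own states, which evolve across layers through $\mathbf{A}^p$ and through messages from original nodes. The offset $\bm{\delta}_v^{(l)}$ is therefore layer-dependent and a function of $\mathbf{H}^{(l-1)}$. A GMP with a fixed learnable $\mathbf{P}$ per layer matches it only if $\mathbf{P}$ is allowed to be layer-specific and input-conditioned, as in Eq.(\ref{eq:dynamic_u}).
\item For non-sum aggregators, such as mean, max, or normalized GCN weights, inserting nodes changes the degree normalization $\hat{\mathbf{A}}_{vu}$. This rescaling can be absorbed as in the multiplicative case of Proposition 3, but only exactly for linear or normalized-sum aggregation.
\item Readout over prompt nodes in graph-level tasks is not captured by messages into $\mathcal{V}$.
\end{itemize}
Given these, I would state the result as exact equivalence of the original-node representations for one layer under sum or normalized-sum aggregation, and as approximate equivalence otherwise.

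\textbf{Main obstacle.} The hardest step is the multi-layer dynamics of the prompt-node states. My plan is to unroll the recursion on $\mathcal{V}^p$ and show that $\bm{\delta}_v^{(l)}$ is a function of the previous-layer messages. A sufficiently expressive, possibly low-rank, conditional prompt $\phi(\mathbf{M};\mathbf{W})$ can then approximate this function, and the approximation becomes exact when the prompt-node states are frozen or when there is a single layer.
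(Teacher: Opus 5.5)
Your construction is the paper's. It decomposes the sum-aggregated input of each $v\in\mathcal{V}$ into original-neighbor and prompt-neighbor parts, then spreads the latter evenly over the incoming original messages via $\mathbf{P}_{v\leftarrow u}=\frac{1}{|\mathcal{N}_v|}\sum_{k\in\mathcal{V}^p}\mathbf{A}^p_{vk}[\mathbf{H}^p_k\Vert\mathbf{E}^p_{vk}]$, which equals your $\mathbf{A}_{vu}\bm{\delta}_v/d_v$ for binary $\mathbf{A}$. Your separate cross-link matrix $\mathbf{A}^{c}$ and the self-loop fix for isolated nodes clean up points the paper glosses over.

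Your explanation of why the result is only ``approximate'' goes beyond the paper, which gives just this one-layer decomposition, but it is more conservative than needed. Because $\mathbf{P}$ is a free per-edge, per-layer offset, you can match a fixed input graph exactly layer by layer, and also under mean or max aggregation. Input-conditioning is therefore needed only in the inductive, multi-graph setting.
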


\begin{proof}
As shown in Eq.(\ref{eq:sub_p}), subgraph prompt augments the original graph $\mathcal{G}$ by inserting a prompt subgraph $\mathcal{G}^p$, resulting in $\tilde{\mathcal{G}}=\mathcal{G}\cup\mathcal{G}^p$. To be specific, the messages received by a node $v \in \mathcal{V}$ consist of information aggregated from its neighborhood set in the original graph and from its neighborhood set in the prompted subgraph.
Then, the message aggregation for node $v$ (Eq.(\ref{eq:AGG})) can be rewritten as follows:
 \begin{equation}
   \begin{aligned}\label{eq:sgmp}
      \mathbf{H}'_{v}&= \sum\limits_{u\in\mathcal{N}_v} 
    \mathbf{M}_{v\leftarrow u} +\sum\limits_{k\in\mathcal{N}^p_v} \mathbf{M}_{v\leftarrow k} \\
    &=\sum\limits_{u\in\mathcal{N}_v} \mathbf{A}_{vu}[\mathbf{H}_u\Vert \mathbf{E}_{vu}] +\sum\limits_{k\in \mathcal{N}^p_v}\mathbf{A}^p_{vk}[\mathbf{{H}}^p_k\Vert \mathbf{E}^p_{vk}]\\
    &=\sum\limits_{u\in\mathcal{N}_v} 
    (\mathbf{M}_{v\leftarrow u} +\mathbf{P}_{v\leftarrow u}) \\
   \end{aligned}
    \end{equation}
where $\mathcal{N}_v$ and $\mathcal{N}^p_v$ denote the neighborhood set of node $v$ in the original graph $\mathcal{G}$ and the prompted subgraph $\mathcal{G}^p$, respectively. Therefore, the subgraph prompt is equivalent to conducting GMP (Eq.(\ref{eq:gmp})) by defining the message prompt as
    \begin{equation} 
    \mathbf{P}_{v\leftarrow u}
    =\frac{1}{|\mathcal{N}_v|}\sum\limits_{k\in \mathcal{V}^p} \mathbf{A}^p_{vk}[\mathbf{H}^p_k\Vert \mathbf{E}^p_{vk}]
    \end{equation}
%
\end{proof}

\begin{proposition}
For hybrid prompt, there exists a special GMP to achieve the equivalent effect. 
\end{proposition}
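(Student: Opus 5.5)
The plan is to reduce the hybrid case to the node feature and edge weight cases already handled by Proposition 1 and Proposition 3, and then check that the two message-level offsets combine additively into a single GMP prompt.

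By Eq.(\ref{EQ:hyb_p}), the hybrid prompt feeds $\mathcal{G}(\mathbf{E},\tilde{\mathbf{H}},\tilde{\mathbf{A}})$ into the backbone, with
\begin{equation*}
\tilde{\mathbf{H}}_u=\mathbf{H}_u+\sum_{j=1}^{k}\bm{\alpha}^{n}_{uj}\mathbf{Z}_j, \qquad \tilde{\mathbf{A}}=\mathbf{A}\odot\mathbf{S}.
\end{equation*}
Write $\bm{\delta}_u=\sum_{j}\bm{\alpha}^{n}_{uj}\mathbf{Z}_j$ for brevity. Under the message $\mathbf{M}_{v\leftarrow u}=\mathbf{A}_{vu}[\mathbf{H}_u\Vert\mathbf{E}_{vu}]$, the prompted message is
\begin{equation*}
\tilde{\mathbf{M}}_{v\leftarrow u}=\tilde{\mathbf{A}}_{vu}\big[\mathbf{H}_u+\bm{\delta}_u\,\Vert\,\mathbf{E}_{vu}\big].
\end{equation*}

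The first step is to expand this expression using bilinearity of scalar multiplication over concatenation:
\begin{equation*}
\tilde{\mathbf{M}}_{v\leftarrow u}=\mathbf{A}_{vu}[\mathbf{H}_u\Vert\mathbf{E}_{vu}]+(\tilde{\mathbf{A}}_{vu}-\mathbf{A}_{vu})[\mathbf{H}_u\Vert\mathbf{E}_{vu}]+\tilde{\mathbf{A}}_{vu}[\bm{\delta}_u\Vert\mathbf{0}].
\end{equation*}
The first term is $\mathbf{M}_{v\leftarrow u}$. The second term is exactly the edge weight offset of Eq.(\ref{eq:P3-2}). The third term is a node feature offset in the form of Proposition 1, except that it is weighted by $\tilde{\mathbf{A}}_{vu}$ rather than $\mathbf{A}_{vu}$. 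This gives the message prompt
\begin{equation*}
\mathbf{P}_{v\leftarrow u}=(\tilde{\mathbf{A}}_{vu}-\mathbf{A}_{vu})[\mathbf{H}_u\Vert\mathbf{E}_{vu}]+\tilde{\mathbf{A}}_{vu}\Big[\sum_{j=1}^{k}\bm{\alpha}^{n}_{uj}\mathbf{Z}_j\,\Vert\,\mathbf{0}\Big],
\end{equation*}
so that $\tilde{\mathbf{M}}=\mathbf{M}+\mathbf{P}$, as required by Eq.(\ref{eq:gmp}). The same computation works with the additive structure prompt of Eq.(\ref{EQ:ept3}), taking $\tilde{\mathbf{A}}=\mathbf{A}+\mathbf{S}$. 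It also extends to a hybrid that includes an edge feature prompt: that adds a term $\tilde{\mathbf{A}}_{vu}[\mathbf{0}\Vert\cdot]$ as in Proposition 2.

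The main obstacle is the cross term. Simply summing the prompts from Propositions 1 and 3 does not work, because the feature and structure prompts interact multiplicatively through $\tilde{\mathbf{A}}_{vu}\bm{\delta}_u$. The decomposition must therefore carry the prompted weight $\tilde{\mathbf{A}}_{vu}$, and not $\mathbf{A}_{vu}$, on the node feature offset. An equivalent way to handle this is to write the term as $\mathbf{A}_{vu}[\bm{\delta}_u\Vert\mathbf{0}]+(\tilde{\mathbf{A}}_{vu}-\mathbf{A}_{vu})[\bm{\delta}_u\Vert\mathbf{0}]$, which makes the extra interaction term explicit. Once this is done, GMP represents the hybrid prompt exactly, and the equivalence holds layer by layer, since each layer's messages are prompted in the same way.
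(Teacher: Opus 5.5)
Your proof is correct and is essentially the paper's argument: both define $\mathbf{P}_{v\leftarrow u}=\tilde{\mathbf{M}}_{v\leftarrow u}-\mathbf{M}_{v\leftarrow u}$ and expand it. Substituting $\tilde{\mathbf{A}}_{vu}=\mathbf{A}_{vu}\mathbf{S}_{vu}$ into your prompt gives exactly the paper's $\mathbf{A}_{vu}\big((\mathbf{S}_{vu}-1)[\mathbf{H}_u\Vert\mathbf{E}_{vu}]+\mathbf{S}_{vu}[\mathbf{Z}_u\Vert\mathbf{0}]\big)$ (with $\mathbf{Z}_u$ playing the role of your $\bm{\delta}_u$), including the $\mathbf{S}_{vu}$-weighted cross term you correctly flag. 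Your closing layer-by-layer remark is slightly loose, since the node-feature offset enters only at the input layer, but this does not affect the argument.
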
 
\begin{proof}
 Hybrid prompt aims to integrate different GDPs to enhance the adaptability of pre-trained models. Here, we consider a hybrid prompt that combines the node feature prompt defined in Eq.(\ref{EQ:fpt2}) and the edge weight prompt defined in Eq.(\ref{EQ:ept4}). Given prompt $\mathbf{Z}$ and $\mathbf{S}$, 
we can decompose the message prompt $\mathbf{P}_{v\leftarrow u}$ into node and edge components, 
\begin{equation}\label{eq:hgmp}
\begin{aligned}
\mathbf{P}_{v\leftarrow u} &= \mathbf{\tilde{M}}_{v\leftarrow u} - \mathbf{M}_{v\leftarrow u} \\
&= \mathbf{\tilde{A}}_{vu}[(\mathbf{H}_u + \mathbf{Z}_u )\Vert \mathbf{E}_{vu}] - \mathbf{A}_{vu}[\mathbf{H}_u \Vert \mathbf{E}_{vu}]\\
&=\mathbf{A}_{vu} \big( (\mathbf{S}_{vu} - 1) [\mathbf{H}_u || \mathbf{E}_{vu}] + \mathbf{S}_{vu} [\mathbf{Z}_u || \mathbf{0}] \big)
\end{aligned}
\end{equation}
where $\mathbf{\tilde{A}}_{vu} = \mathbf{A}_{vu}\mathbf{S}_{vu}$ denotes the prompted edge weight as shown in Eq.(\ref{EQ:ept4}). Therefore, the hybrid prompt is equivalent to conducting GMP (Eq.(\ref{eq:gmp})) by defining the message prompt as shown in Eq.(\ref{eq:hgmp}). 
\end{proof}

\section{Experiments}

\subsection{Dataset Setup}
\textbf{Pre-training.}
For node classification tasks, we take two different large-scale datasets (ogbn-products~\cite{OGB} and Flickr~\cite{graphsaint} datasets) to train the GNN backbone based on different pre-training strategies.
For graph classification tasks, we follow the setting of GPF~\cite{gpf24} and construct a large-scale pre-training dataset comprising $2$ million unlabeled molecular graphs sampled from the ZINC15 database~\cite{sterling2015zinc}.

\textbf{Fine-tuning.}
For node classification tasks, we evaluate our method on six widely used benchmark datasets, including three Citation datasets~\cite{sen2008collective} (Cora, CiteSeer and PubMed), two Amazon co-purchase datasets~\cite{shchur2018pitfalls} (Photo and Computers) and the Coauthor Physics dataset~\cite{shchur2018pitfalls}. 
Then, we perform $1$-shot node classification tasks of one node per class as training labeled data.
For graph classification tasks, we evaluate our methods on seven chemistry molecule datasets provided by previous work~\cite{hustrategies}. Then, we follow GPF~\cite{gpf24} and adopt the challenging scaffold split to generate data splits.
The brief introduction of all datasets  is summarized in Table~\ref{tab:datasets}. 
\begin{table}[!ht]
\centering
\fontsize{7.5pt}{10pt}\selectfont
\caption{Statistics of the benchmark datasets used in our experiments.}
\label{tab:datasets}
\begin{tabular}{c|cccccc}
\hline
{Task Type} & {Dataset} & {Graphs} & {Nodes} & {Edges}  & {Tasks} \\ 
\hline
\multirow{6}{*}{\begin{tabular}[c]{@{}c@{}}Node \\ Classification\end{tabular}} 
 & Cora             & 1           & 2,708        & 5,429   & 7  \\
 & CiteSeer         & 1           & 3,327        & 4,732   & 6 \\
 & PubMed           & 1           & 19,717       & 44,338  & 3 \\
 & Photo     & 1           & 7,650        & 119,081        & 8 \\
 & Computers & 1           & 13,752       & 245,861        & 10 \\
 & Physics & 1           & 34,493       & 247,962          & 5  \\ 
\hline
{Task Type} & {Dataset} & {Graphs} & {Avg.Nodes} & {Avg.Edges}& {Tasks} \\ 
\hline
\multirow{7}{*}{\begin{tabular}[c]{@{}c@{}}Graph \\ Classification\end{tabular}} 
 & BBBP             & 2,039       & 24.1        & 25.9    & 1 \\
 & Tox21            & 7,831       & 18.6         & 19.3   & 12 \\
 & ToxCast          & 8,575       & 18.8         & 19.3   & 617 \\
 & SIDER            & 1,427       & 33.6         & 35.0   & 27   \\
 & ClinTox          & 1,478       & 26.2         & 27.9   & 2  \\
 & MUV              & 93,087      & 24.2         & 25.9   & 17 \\
 & BACE             & 1,513       & 34.1         & 36.9   & 1  \\ 
\hline
\end{tabular}
\end{table}
\begin{table*}[!ht]
	\fontsize{9.3pt}{11pt}\selectfont
    \renewcommand{\arraystretch}{1.2}
	\centering
	\caption{Comparison results of node classification tasks on six benchmarks. The best result is marked in bold.}
	\begin{tabular}{c | l | llllll}
		\toprule	
		\multicolumn{2}{c|}{Method} & Cora & CiteSeer & PubMed & Photo & Computers & Physics\\
		\midrule
		\multirow{11}{*}{\rotatebox{90}{SGCL}} 
        & FT  &  35.80{\scriptsize$\pm$7.94} &  23.65{\scriptsize$\pm$4.19}  & 49.84{\scriptsize$\pm$5.98}  & 54.44{\scriptsize$\pm$8.48}  &  39.58{\scriptsize$\pm$11.09}  &  75.74{\scriptsize$\pm$11.65} \\
		& All in One  &  40.54{\scriptsize$\pm$6.03} &  27.10{\scriptsize$\pm$6.39}  & 51.63{\scriptsize$\pm$9.67}  & 48.63{\scriptsize$\pm$9.25}  &  36.91{\scriptsize$\pm$10.65}  &  71.05{\scriptsize$\pm$8.18}  \\
		& GPF  &  43.68{\scriptsize$\pm$5.74} &  23.08{\scriptsize$\pm$3.35}  & 55.38{\scriptsize$\pm$11.46}  & 52.61{\scriptsize$\pm$12.35} &  39.35{\scriptsize$\pm$9.44}  &  80.36{\scriptsize$\pm$4.46}  \\
		& GPF-plus  &  42.90{\scriptsize$\pm$8.74} &  21.81{\scriptsize$\pm$3.77}  & 55.88{\scriptsize$\pm$11.94}  & 52.07{\scriptsize$\pm$11.14}  &  41.91{\scriptsize$\pm$12.37}  &  80.58{\scriptsize$\pm$3.92}  \\
		& EdgePrompt  & 47.04{\scriptsize$\pm$6.97}  & 28.14{\scriptsize$\pm$6.60} & 56.08{\scriptsize$\pm$6.54}  &  51.50{\scriptsize$\pm$12.72}  &  46.99{\scriptsize$\pm$8.50}  &  77.86{\scriptsize$\pm$6.54} \\
		& EdgePrompt+  & 38.90{\scriptsize$\pm$9.52}  & 25.72{\scriptsize$\pm$6.23} & 52.27{\scriptsize$\pm$8.31}  &  51.64{\scriptsize$\pm$11.23}  &  46.48{\scriptsize$\pm$8.09} &  79.81{\scriptsize$\pm$4.54} \\
		& GraphTOP  & 40.56{\scriptsize$\pm$11.23}  & 33.24{\scriptsize$\pm$3.12} &  52.21{\scriptsize$\pm$9.21}   &  59.01{\scriptsize$\pm$8.74}  &  53.14{\scriptsize$\pm$8.15}  &  80.75{\scriptsize$\pm$4.79} \\
		& UniPrompt  & 45.20{\scriptsize$\pm$10.15}  & 34.76{\scriptsize$\pm$5.91} &  55.17{\scriptsize$\pm$6.25}   &  58.21{\scriptsize$\pm$8.79}  &  56.39{\scriptsize$\pm$12.08} &  81.14{\scriptsize$\pm$12.87} \\
		& LR-GMP  & \textbf{47.96{\scriptsize$\pm$7.42}}  & \textbf{38.76{\scriptsize$\pm$6.40}} & \textbf{62.44{\scriptsize$\pm$2.50}}  & \textbf{60.63{\scriptsize$\pm$9.82}}  & \textbf{59.04{\scriptsize$\pm$9.51}} &  \textbf{84.90{\scriptsize$\pm$4.16}} \\
 	\midrule
        \multirow{11}{*}{\rotatebox{90}{SUGRL}} 
        & FT  &  32.85{\scriptsize$\pm$10.61} &  23.49{\scriptsize$\pm$4.53}  & 45.04{\scriptsize$\pm$4.85}  & 49.47{\scriptsize$\pm$6.77}  &  50.18{\scriptsize$\pm$9.29} &  76.57{\scriptsize$\pm$5.91} \\
		& All in One  &  34.58{\scriptsize$\pm$9.26} &  18.99{\scriptsize$\pm$4.70}  & 49.90{\scriptsize$\pm$7.96}  & 52.35{\scriptsize$\pm$14.80}  &  34.55{\scriptsize$\pm$12.56} & 74.56{\scriptsize$\pm$11.54} \\
		& GPF  &  45.13{\scriptsize$\pm$6.70} & 23.38{\scriptsize$\pm$4.23}  & 55.51{\scriptsize$\pm$11.11}  & 54.49{\scriptsize$\pm$14.73}  &  38.44{\scriptsize$\pm$10.48} &  80.54{\scriptsize$\pm$4.72} \\
		& GPF-plus  &  44.94{\scriptsize$\pm$6.38} &  23.14{\scriptsize$\pm$6.16}  & 55.68{\scriptsize$\pm$11.91}  & 53.21{\scriptsize$\pm$14.54}  &  39.88{\scriptsize$\pm$12.39}  &  80.26{\scriptsize$\pm$4.77} \\
		& EdgePrompt  & 45.58{\scriptsize$\pm$11.21}  & 29.16{\scriptsize$\pm$9.37} & 51.29{\scriptsize$\pm$10.81}  &  53.36{\scriptsize$\pm$11.38}  &  44.93{\scriptsize$\pm$8.41} &  78.92{\scriptsize$\pm$8.12} \\
		& EdgePrompt+  & 41.64{\scriptsize$\pm$10.64}  & 31.40{\scriptsize$\pm$8.55} & 51.40{\scriptsize$\pm$7.22}  &  53.86{\scriptsize$\pm$11.04}  &  46.32{\scriptsize$\pm$8.11}  &  80.01{\scriptsize$\pm$4.56} \\
		& GraphTOP  & 43.46{\scriptsize$\pm$6.35}  & 32.80{\scriptsize$\pm$1.66} &  55.20{\scriptsize$\pm$6.32}   &  58.21{\scriptsize$\pm$9.15}  &  52.26{\scriptsize$\pm$8.46}  &  80.95{\scriptsize$\pm$6.21}\\
		& UniPrompt  & 40.73{\scriptsize$\pm$9.28}  & 36.51{\scriptsize$\pm$4.18} &  54.44{\scriptsize$\pm$10.90}   &  58.19{\scriptsize$\pm$8.18}  &  53.97{\scriptsize$\pm$9.08}  &  82.81{\scriptsize$\pm$7.46} \\
		& LR-GMP  & \textbf{48.16{\scriptsize$\pm$6.69}}  & \textbf{39.31{\scriptsize$\pm$6.26}} & \textbf{61.82{\scriptsize$\pm$2.70}}  & \textbf{59.52{\scriptsize$\pm$5.12}}  & \textbf{57.65{\scriptsize$\pm$10.21}}  &  \textbf{84.85{\scriptsize$\pm$4.30}}  \\
  
	    \midrule
		\multirow{11}{*}{\rotatebox{90}{Edgepred-Gprompt}} 
        & FT  &  36.14{\scriptsize$\pm$5.70}  &  24.80{\scriptsize$\pm$5.36}  & 49.33{\scriptsize$\pm$4.82}  & 55.37{\scriptsize$\pm$9.77}  &  49.25{\scriptsize$\pm$15.44} &  67.56{\scriptsize$\pm$18.97} \\
		& All in One  &  43.94{\scriptsize$\pm$4.23}  & 26.50{\scriptsize$\pm$6.12}  & 51.46{\scriptsize$\pm$9.63}  & 45.80{\scriptsize$\pm$10.52}  &  36.93{\scriptsize$\pm$12.68} &  65.73{\scriptsize$\pm$9.53} \\ 
		& GPF  &  44.26{\scriptsize$\pm$8.17}  &  24.88{\scriptsize$\pm$5.09}  & 55.51{\scriptsize$\pm$11.12}  & 50.35{\scriptsize$\pm$9.75}  &  36.11{\scriptsize$\pm$10.91} &  80.70{\scriptsize$\pm$4.18} \\
		& GPF-plus  &  43.95{\scriptsize$\pm$6.92} &   23.13{\scriptsize$\pm$5.20}  & 55.13{\scriptsize$\pm$11.70}  & 53.37{\scriptsize$\pm$15.75}  &  41.33{\scriptsize$\pm$11.75} &  80.69{\scriptsize$\pm$5.43} \\
		& EdgePrompt  & 48.11{\scriptsize$\pm$7.90}  & 26.38{\scriptsize$\pm$5.32} &  52.22{\scriptsize$\pm$7.43}   &  50.72{\scriptsize$\pm$12.24}  &  50.28{\scriptsize$\pm$7.85} &  80.80{\scriptsize$\pm$4.91}  \\
		& EdgePrompt+  & 44.01{\scriptsize$\pm$10.75}  & 26.40{\scriptsize$\pm$6.51} &  51.72{\scriptsize$\pm$8.53}   &  52.94{\scriptsize$\pm$12.36}  &  47.91{\scriptsize$\pm$8.41} &  80.70{\scriptsize$\pm$5.00} \\
		& GraphTOP  & 42.60{\scriptsize$\pm$8.13}  & 33.86{\scriptsize$\pm$1.66} &  53.69{\scriptsize$\pm$6.12}   &  59.11{\scriptsize$\pm$8.47}  &  52.30{\scriptsize$\pm$8.45}  &  80.80{\scriptsize$\pm$4.35} \\
		& UniPrompt  & 44.17{\scriptsize$\pm$10.22}  & 36.02{\scriptsize$\pm$7.63} &  55.57{\scriptsize$\pm$7.13}   &  59.17{\scriptsize$\pm$6.91}  &  57.28{\scriptsize$\pm$10.56} &  81.03{\scriptsize$\pm$12.41}  \\
		& LR-GMP  & \textbf{50.63{\scriptsize$\pm$8.03}}  & \textbf{40.90{\scriptsize$\pm$7.74}} & \textbf{61.68{\scriptsize$\pm$4.00}}  & \textbf{60.74{\scriptsize$\pm$8.04}} &   \textbf{60.99{\scriptsize$\pm$10.11}} &  \textbf{84.93{\scriptsize$\pm$5.07}} \\
  
	    \midrule
        \multirow{11}{*}{\rotatebox{90}{Edgepred-GPPT}} 
        & FT  &  31.10{\scriptsize$\pm$9.00} &  24.54{\scriptsize$\pm$3.89}  & 47.56{\scriptsize$\pm$5.45}  & 51.59{\scriptsize$\pm$11.86}  &  50.98{\scriptsize$\pm$10.24} &  71.25{\scriptsize$\pm$14.88} \\
        & All in One  &  41.05{\scriptsize$\pm$8.01} &  27.58{\scriptsize$\pm$6.33}  & 50.53{\scriptsize$\pm$9.46}  & 47.03{\scriptsize$\pm$11.06}  &  35.90{\scriptsize$\pm$10.78} &  66.58{\scriptsize$\pm$10.07} \\
        & GPF  &  45.16{\scriptsize$\pm$8.92} &  24.16{\scriptsize$\pm$4.03}  & 55.33{\scriptsize$\pm$11.36}  & 55.10{\scriptsize$\pm$10.91}  &  43.14{\scriptsize$\pm$13.59} &  80.26{\scriptsize$\pm$4.89} \\
        & GPF-plus  &  42.09{\scriptsize$\pm$6.49} &  23.26{\scriptsize$\pm$6.87}  & 55.31{\scriptsize$\pm$11.96}  & 54.52{\scriptsize$\pm$14.32}  &  38.18{\scriptsize$\pm$14.67} &  80.15{\scriptsize$\pm$5.11} \\
        & EdgePrompt  & 47.21{\scriptsize$\pm$7.80}  & 24.65{\scriptsize$\pm$6.24} & 52.22{\scriptsize$\pm$7.43}  &  54.33{\scriptsize$\pm$13.43}  &  50.30{\scriptsize$\pm$9.33} &  80.78{\scriptsize$\pm$4.85}  \\
        & EdgePrompt+  & 43.04{\scriptsize$\pm$3.36}  & 24.68{\scriptsize$\pm$7.18} & 51.10{\scriptsize$\pm$9.06}  &  54.97{\scriptsize$\pm$12.25}  &  47.91{\scriptsize$\pm$10.22} &  80.36{\scriptsize$\pm$4.32}  \\
        & GraphTOP  & 42.64{\scriptsize$\pm$7.57}  & 30.94{\scriptsize$\pm$3.30} &  57.35{\scriptsize$\pm$7.44}   &  58.92{\scriptsize$\pm$7.56}  &  55.65{\scriptsize$\pm$9.12}  &  80.79{\scriptsize$\pm$4.39} \\
        & UniPrompt  & 41.94{\scriptsize$\pm$9.70}  & 36.07{\scriptsize$\pm$6.72} &  55.68{\scriptsize$\pm$6.91}   &  58.46{\scriptsize$\pm$9.50}  &  58.24{\scriptsize$\pm$10.21} &  82.02{\scriptsize$\pm$10.51}  \\
        & LR-GMP  & \textbf{50.34{\scriptsize$\pm$8.96}}  & \textbf{43.57{\scriptsize$\pm$7.73}} & \textbf{62.64{\scriptsize$\pm$1.22}}  & \textbf{60.27{\scriptsize$\pm$8.16}}  & \textbf{60.35{\scriptsize$\pm$8.63}}   &  \textbf{85.08{\scriptsize$\pm$6.29}} \\
        \bottomrule
	\end{tabular}
	\label{ncla}
\end{table*}

\subsection{Parameter Setting}
\textbf{Pre-training.} For node classification tasks, we adopt a three-layer GCN architecture~\cite{kipf2017semi} with a hidden dimension of $128$ as the backbone model. Then, we adopt four different self-supervised strategies to effectively pre-train the GCN model, specifically including SGCL~\cite{SGCL} and SUGRL~\cite{mo2022simple} on ogbn-products dataset, as well as EdgePred-GraphPrompt~\cite{graphprompt23} and EdgePred-GPPT~\cite{gppt22} and on Flickr dataset. In pre-training stage, all hyper-parameters are set according to the default configurations provided in these self-supervised methods.
For graph classification tasks, we adopt a five-layer GIN~\cite{GIN} with a hidden dimension of $300$ as the backbone model. Then, we adopt the EdgePred method~\cite{kipf2017semi} to train GIN backbone. In our experiments, we directly utilize the publicly available pre-trained model provided by GPF~\cite{gpf24}.

\textbf{Fine-tuning.} For node classification tasks, we first reprogram the downstream data by introducing a linear transformation function, which facilitates the reuse of the pre-trained model~\cite{hou2024graphalign}. The output dimension of this transform module is determined based on the pre-trained dataset. In addition, the hidden dimension is set to $128$ to align with the pre-trained model. 
The learning rate and the dropout rate are set respectively to $0.001$ and $0$. 
Finally, the hyper-parameter $r$ is selected from the range of $\{2,5,10\}$ and determined  by the validation set for optimal balanced performance. 
For graph classification, the hidden dimension is set to $300$ to align with the pre-trained model. 
Then, the learning rate and the dropout rate are respectively set to $0.002$ and $0$. 
The hyper-parameter $r$ is selected from the range of $\{2,5,10\}$ and  determined  by the validation set for optimal balanced performance. 

\subsection{Comparison Results}
\textbf{Comparison baselines.}
For a comprehensive evaluation, we compare our proposed methods with the following baselines based on different prompting strategies: 
\begin{itemize}
    \item {Full Fine-tuning (FT):} updates all parameters of the pre-trained GNN backbone.
    \item {Node feature prompt methods:} inject learnable prompts  into node feature space, including GPF~\cite{gpf24} and its variant GPF-plus~\cite{gpf24}. For graph classification tasks, we also include GPPT~\cite{gppt22} and GraphPrompt~\cite{graphprompt23} for comparisons.
    \item {Edge feature prompt methods:} introduce trainable edge feature prompts, including EdgePrompt~\cite{EdgePrompt} and its variant {EdgePrompt+}~\cite{EdgePrompt}.
    \item {Edge weight prompt methods:} adjust the graph structure or edge weights, including GraphTOP~\cite{graphtop25} that performs topology-oriented prompting, and UniPrompt~\cite{uniprompt} that unifies graph adaptation via structure prompts.
    \item {Hybrid/Subgraph prompt methods:} integrate multiple graph prompting strategies or inject prompted subgraph, including VGP~\cite{vgp} that adopts hybrid prompt for vision-graph tasks and All in One~\cite{all_in_one23} that uses subgraph prompt. 
\end{itemize}

\textbf{Main results.}
Table~\ref{ncla} reports the average performance of all compared methods with five different data splits. 
Overall, LR-GMP consistently achieves the best performance on all datasets, demonstrating the effectiveness and robustness of our proposed approach. 
Specifically, we can have the following observations. First, compared with the traditional full fine-tuning (FT) method, LR-GMP can achieve higher performance. This validates that prompt tuning is more effective in alleviating the negative transfer problem and better preserving transferable knowledge from pre-trained models. 
Second, LR-GMP outperforms existing single graph prompt methods, including GPFs~\cite{gpf24}, EdgePrompts~\cite{EdgePrompt}, GraphTOP~\cite{graphtop25}, and UniPrompt~\cite{uniprompt}, which typically apply prompting to individual graph components. 
This demonstrates
the advantage of our proposed graph message prompt in enabling a more expressive learning space.
Third, LR-GMP consistently obtains better results than subgraph prompt methods (All in One~\cite{all_in_one23}). This further verifies that our proposed graph message prompting strategy can more effectively unify different GDPs and thus better guide fine-tuning for diverse graph tasks.
\begin{table*}[!htp]
\fontsize{9.3pt}{11pt}\selectfont
\renewcommand{\arraystretch}{1.2}
\centering
\caption{Comparison results of graph classification tasks on seven chemistry benchmarks, based on pre-trained GIN by EdgePred method. The best result is marked in bold.}
\begin{tabular}{l|ccccccc|c}
\toprule
{\begin{tabular}[c]{@{}l@{}}Method\end{tabular}} & BBBP & Tox21 & ToxCast & SIDER & ClinTox & MUV & BACE & \textbf{Average} \\
\midrule
FT & 66.56{\scriptsize$\pm$3.56} & 78.67{\scriptsize$\pm$0.35} & 66.29{\scriptsize$\pm$0.45} & 64.35{\scriptsize$\pm$0.78} & 69.07{\scriptsize$\pm$4.61} & 79.67{\scriptsize$\pm$1.70} & 80.90{\scriptsize$\pm$0.92} & 72.72 \\
GPPT & 64.13{\scriptsize$\pm$0.14} & 66.41{\scriptsize$\pm$0.04} & 60.34{\scriptsize$\pm$0.14} & 54.86{\scriptsize$\pm$0.25} & 59.81{\scriptsize$\pm$0.46} & 63.05{\scriptsize$\pm$0.34} & 70.85{\scriptsize$\pm$1.42} & 61.80 \\
GPPT w/o ol & 69.43{\scriptsize$\pm$0.18} & 78.91{\scriptsize$\pm$0.15} & 64.86{\scriptsize$\pm$0.11} & 60.94{\scriptsize$\pm$0.18} & 62.15{\scriptsize$\pm$0.69} & 82.06{\scriptsize$\pm$0.53} & 70.31{\scriptsize$\pm$0.99} & 70.97 \\
GraphPrompt & 69.29{\scriptsize$\pm$0.19} & 68.09{\scriptsize$\pm$0.19} & 60.54{\scriptsize$\pm$0.21} & 58.71{\scriptsize$\pm$0.13} & 55.37{\scriptsize$\pm$0.57} & 62.35{\scriptsize$\pm$0.44} & 67.70{\scriptsize$\pm$1.26} & 61.20 \\
GPF & 69.57{\scriptsize$\pm$0.21} & 79.74{\scriptsize$\pm$0.03} & 65.65{\scriptsize$\pm$0.30} & 67.20{\scriptsize$\pm$0.99} & 69.49{\scriptsize$\pm$5.17} & 82.86{\scriptsize$\pm$0.23} & 81.57{\scriptsize$\pm$1.08} & 74.51 \\
GPF-plus & 69.06{\scriptsize$\pm$0.68} & 80.04{\scriptsize$\pm$0.06} & 65.94{\scriptsize$\pm$0.31} & 67.51{\scriptsize$\pm$0.59} & 68.80{\scriptsize$\pm$2.58} & 83.13{\scriptsize$\pm$0.42}  & 81.75{\scriptsize$\pm$2.09} & 74.54 \\
VGP &  70.17{\scriptsize$\pm$0.40}  &  80.08{\scriptsize$\pm$0.04}  &  68.25{\scriptsize$\pm$0.16}  & 67.12{\scriptsize$\pm$0.63} &  74.82{\scriptsize$\pm$0.40}  &  82.64{\scriptsize$\pm$0.72} &  83.62{\scriptsize$\pm$0.31}   &  75.24  \\
\midrule
LR-GMP & \textbf{73.15{\scriptsize$\pm$0.40}}  &  \textbf{80.14{\scriptsize$\pm$0.22}} & \textbf{68.28{\scriptsize$\pm$0.45}}    &   \textbf{68.35{\scriptsize$\pm$0.38}}
 &\textbf{78.56{\scriptsize$\pm$2.12}} & \textbf{85.04{\scriptsize$\pm$0.43}}   &  \textbf{85.70{\scriptsize$\pm$1.28}} & \textbf{77.03}  \\
\bottomrule
\end{tabular}
\label{gcla}
\end{table*}

\textbf{Other results.}
Table~\ref{gcla}  reports the average performance of all methods under five random network
initializations on molecule graph classification tasks. 
Overall, LR-GMP achieves the best performance on all datasets, further demonstrating the effectiveness and generalizability of our proposed graph message prompting framework for graph-level tasks.
To be specific, we can find that LR-GMP consistently outperforms some representative simple graph prompt methods (GPPT~\cite{gppt22}, GraphPrompt~\cite{graphprompt23}, GPFs~\cite{gpf24}), which demonstrates the advantage of conducting prompting in graph message space. 
In addition, LR-GMP model obtains better results than VGP~\cite{vgp} that adopts hybrid graph prompt learning. This further illustrates that our graph message prompting framework provides a general and expressive paradigm to unify different GDPs, leading to better generalization performance.
%

\begin{figure}[!ht]
    \centering
    \includegraphics[width=0.485\textwidth]{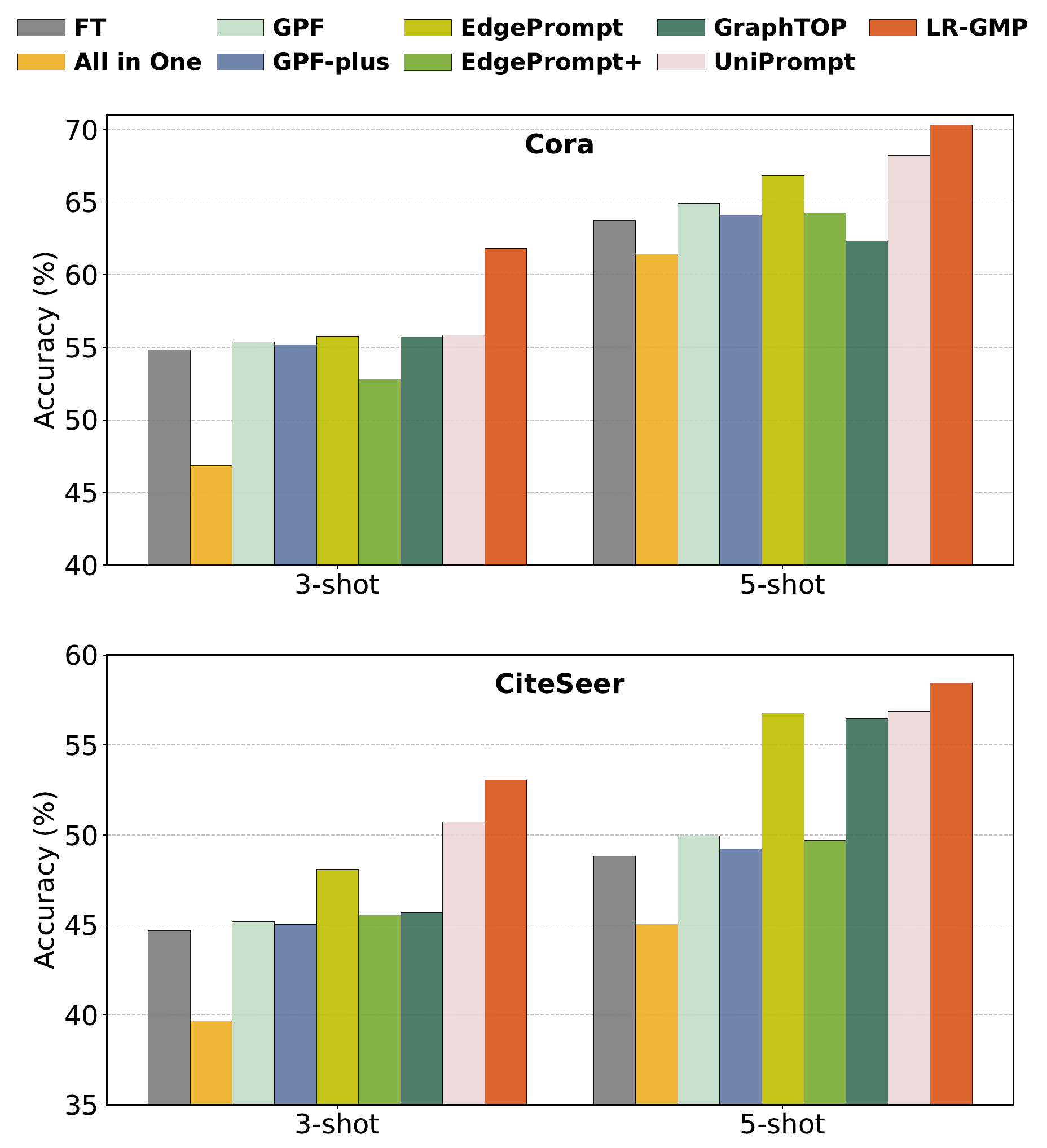}
    \caption{Comparison results of all methods under different shot settings.}
    \label{fig:shot}
\end{figure}
\subsection{Model Analysis}
\textbf{Impact of shot numbers.} In this subsection, we first investigate the performance of our proposed LR-GMP under different few-shot settings.  Fig.~\ref{fig:shot} illustrates the comparison results of various methods on Cora and CiteSeer datasets with $3$-shot and $5$-shot settings. We can observe that as the number of shots increases, the performance of all methods consistently improves, indicating that additional supervision benefits model prompt tuning. More importantly, LR-GMP achieves the best performance in all settings, demonstrating the effectiveness of our proposed message prompt in different few-shot settings. Finally, LR-GMP outperforms both simple graph prompts and hybrid paradigms across all settings. These results demonstrate that conducting prompting in the message space can enable more unified and informative message-level adaptation and thus lead to better generalization ability of scarce labeled data.

\textbf{Robustness to noisy graphs.} To further evaluate the robustness of our LR-GMP method under different graph perturbations, we adopt two representative attack strategies, namely  Random~\cite{li2020deeprobust} and Nettack~\cite{zugner2018adversarial,li2020deeprobust}, to generate noisy graph data. To be specific, 
the Random method performs a global attack by randomly flipping a proportion $p$ of edges in the graph and thus uniformly corrupts the overall topology. The Nettack method conducts a targeted attack to  perturb a small number of edges associated with specific nodes, where the perturbation number is controlled by $p$.
In this subsection, we use the noisy graphs and data splits published by previous work ProGNN~\cite{prognn}. 
Fig.~\ref{fig:robust_demo} illustrates the comparison results on Cora and CiteSeer datasets under different attack settings. Here, we can observe that (1) LR-GMP obviously achieves higher results than node-level prompt methods, demonstrating that our graph message prompting is more robust to structural corruption than approaches operating purely at the node level. (2) LR-GMP can obtain better or comparable performance than edge-level prompt methods in almost all cases, demonstrating that our graph message prompting can enable robust message passing in noisy graphs. Overall, these results show that our unified GMP can provide a more robust learning mechanism by conducting graph prompting in message space.  
\begin{figure*}[!htp]
        \centering
        \includegraphics[width=1\textwidth]{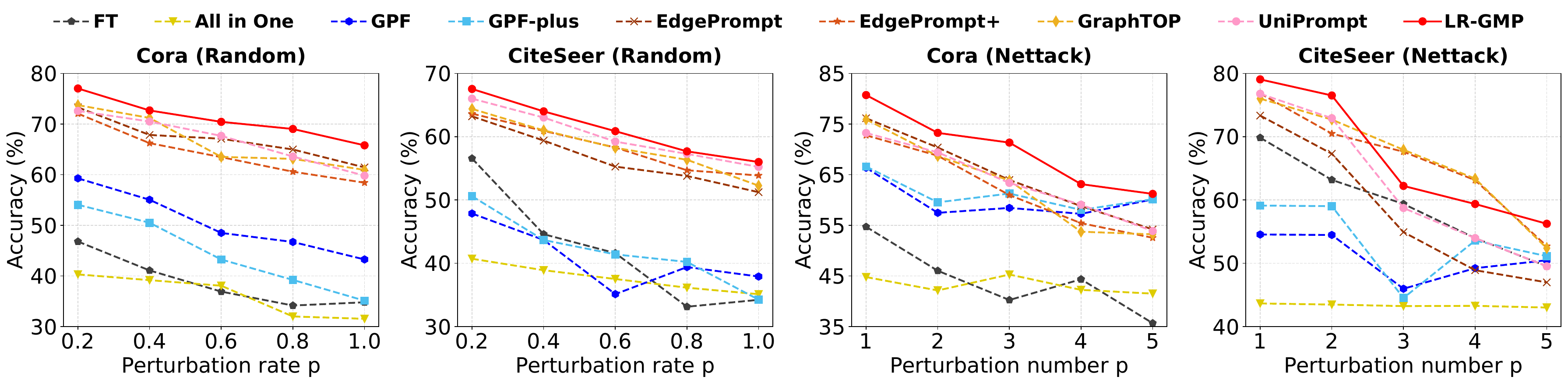}
    \caption{Performance comparison of different methods on noisy graph data, where $p$ denotes the global perturbation rate in Random method or the targeted perturbation number in Nettack method. }\label{fig:robust_demo}
\end{figure*}


\textbf{Model scalability.}
The proposed LR-GMP framework provides a general prompting mechanism that scales well across different GNN models. Since some compared methods rely on edge-weight prompting~\cite{graphtop25,uniprompt}, we adopt two GCN-based models, namely APPNP~\cite{APPNP} and GCNII~\cite{GCNII}, as backbones to ensure compatibility with these approaches.  
As shown in Table~\ref{backbones}, our proposed LR-GMP consistently achieves superior and comparable performance across different GNN backbones. This demonstrates the scalability and effectiveness of our message-level prompt mechanism, which serves as a unified prompting framework for different GNN models.
%
\begin{table}[!htp]
	\fontsize{7.6pt}{8pt}\selectfont
	\centering
    \renewcommand{\arraystretch}{1.2}
	\caption{Node classification results based on pre-trained APPNP and GCNII backbones by Edgepred-Gprompt pre-training method. The best result is marked in bold.}
	\begin{tabular}{l  | llll}
		\toprule	
        APPNP  & Cora & CiteSeer & PubMed & Photo\\
        \midrule
         FT  &  32.88{\scriptsize$\pm$10.96}  &  27.48{\scriptsize$\pm$6.31}  & 54.93{\scriptsize$\pm$7.64}  & 57.43{\scriptsize$\pm$9.92}  \\
		All in One  &  44.85{\scriptsize$\pm$5.67}  &  30.59{\scriptsize$\pm$2.57}  & 55.77{\scriptsize$\pm$7.40}  & 52.62{\scriptsize$\pm$8.42}  \\
		 GPF  &  47.16{\scriptsize$\pm$8.36}  &  30.66{\scriptsize$\pm$5.05}  & 56.35{\scriptsize$\pm$6.25}  & 50.74{\scriptsize$\pm$8.06}  \\
		 GPF-plus  &  47.21{\scriptsize$\pm$6.05}  &  30.21{\scriptsize$\pm$6.50}  & 55.80{\scriptsize$\pm$6.14}  & 55.02{\scriptsize$\pm$6.62}   \\
		 EdgePrompt  &  {47.36{\scriptsize$\pm$6.94}}  &  31.54{\scriptsize$\pm$8.61}  & {60.19{\scriptsize$\pm$5.32}}  & 54.08{\scriptsize$\pm$8.07}  \\
		 EdgePrompt+  &  43.96{\scriptsize$\pm$7.70}  &  28.44{\scriptsize$\pm$6.98}  & 51.71{\scriptsize$\pm$7.83}  & 52.83{\scriptsize$\pm$7.46}   \\
		 GraphTOP  &  44.53{\scriptsize$\pm$6.36}  &  34.50{\scriptsize$\pm$2.87}  & 55.42{\scriptsize$\pm$6.03}  & 54.27{\scriptsize$\pm$5.92}   \\
		UniPrompt  &  42.58{\scriptsize$\pm$7.89}  &  \textbf{45.10{\scriptsize$\pm$3.75}}  & 57.44{\scriptsize$\pm$6.96}  & {63.98{\scriptsize$\pm$9.12}}  \\
		LR-GMP &  \textbf{49.40{\scriptsize$\pm$5.52}}  &  {43.63{\scriptsize$\pm$5.93}}  & \textbf{62.75{\scriptsize$\pm$2.56}}  & \textbf{64.98{\scriptsize$\pm$3.20}}  \\
	    \midrule
        GCNII  & Cora & CiteSeer & PubMed & Photo\\
         \midrule
          FT  &  40.59{\scriptsize$\pm$11.49}  &  28.60{\scriptsize$\pm$7.73} & 53.00{\scriptsize$\pm$6.67}  & 51.18{\scriptsize$\pm$6.74}  \\
		 All in One  &  45.40{\scriptsize$\pm$6.82}  &  31.25{\scriptsize$\pm$3.02}  & 54.22{\scriptsize$\pm$6.29}  & 50.46{\scriptsize$\pm$7.20}   \\
		 GPF  &  {46.10{\scriptsize$\pm$8.19}}  &  28.90{\scriptsize$\pm$6.38}  & 55.94{\scriptsize$\pm$5.90}  & 55.24{\scriptsize$\pm$8.13}   \\
		 GPF-plus  &  45.87{\scriptsize$\pm$4.68}  &  28.24{\scriptsize$\pm$5.66}  & 55.85{\scriptsize$\pm$6.23}  & 55.68{\scriptsize$\pm$4.80}  \\
		 EdgePrompt  &  44.24{\scriptsize$\pm$10.17}  &  29.71{\scriptsize$\pm$8.26}  & 51.40{\scriptsize$\pm$10.87}  & 53.16{\scriptsize$\pm$6.01}  \\
		 EdgePrompt+  &  41.60{\scriptsize$\pm$4.70}  &  28.25{\scriptsize$\pm$5.38}  & {56.77{\scriptsize$\pm$4.77}}  & 53.56{\scriptsize$\pm$6.07}  \\
		 GraphTOP  &  43.45{\scriptsize$\pm$6.77}  &  31.78{\scriptsize$\pm$4.49}  & 56.60{\scriptsize$\pm$7.20}  & 58.47{\scriptsize$\pm$5.45}  \\
		 UniPrompt  &  39.20{\scriptsize$\pm$5.45}  &  {33.84{\scriptsize$\pm$7.84}}  & 54.96{\scriptsize$\pm$9.87}  & {58.67{\scriptsize$\pm$6.89}}  \\
		 LR-GMP & \textbf{47.56{\scriptsize$\pm$4.98}}  &  \textbf{39.32{\scriptsize$\pm$7.66}}  & \textbf{60.10{\scriptsize$\pm$3.60}}  & \textbf{61.86{\scriptsize$\pm$6.22}}   \\
        \bottomrule
	\end{tabular}
	\label{backbones}
\end{table}
\begin{figure}[!ht]
        \centering
        \includegraphics[width=0.485\textwidth]{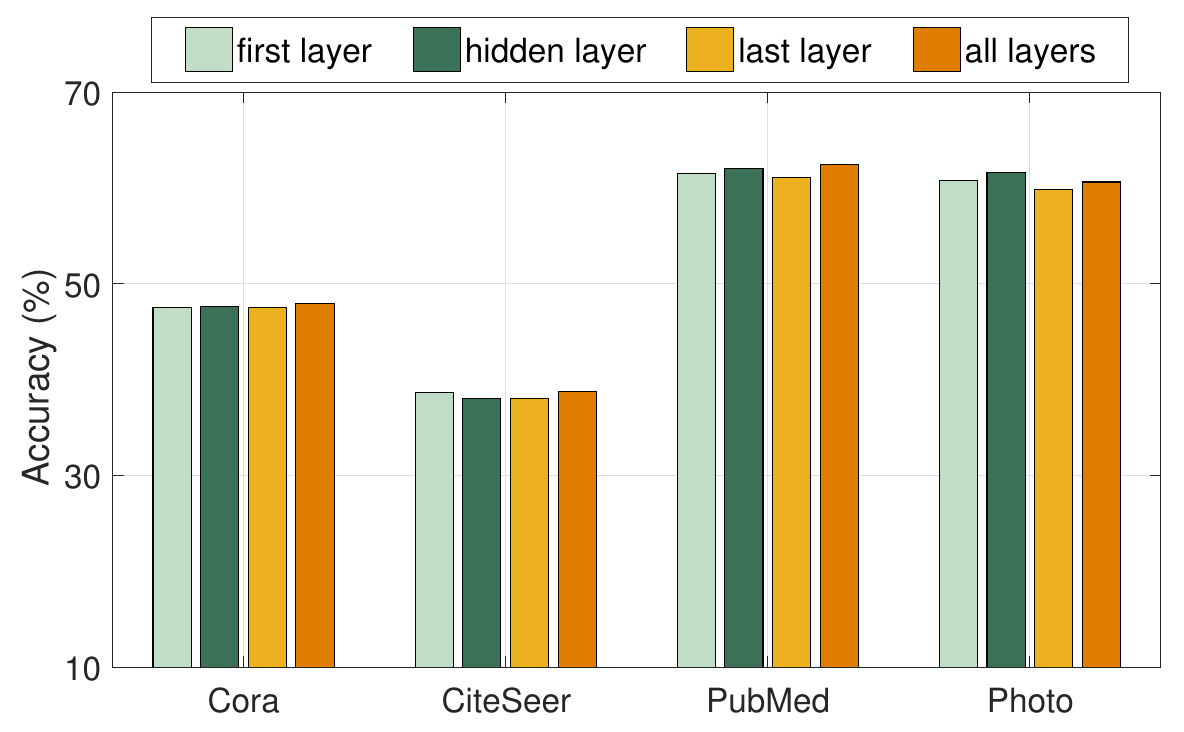}
    \caption{Performance comparison of LR-GMP with different prompt insertion layers including the first, hidden, last layer or across all layers.
    }\label{fig:prompt_layer}
\end{figure}

\textbf{Flexibility of prompt injection layers.} 
The proposed message prompting mechanism can be flexibly applied at different GNN layers to enable diverse layer-wise prompt designs. To analyze the flexibility of our LR-GMP regarding the prompt insertion layer, we evaluate the model's performance by injecting message prompts into the first layer, hidden layer, last layer, as well as across all layers. As illustrated in Fig.~\ref{fig:prompt_layer}, LR-GMP consistently achieves stable and competitive performance in all configurations. Note that, LR-GMP also obtains desirable results even when message prompt is applied to a single layer. 
These observations indicate that our proposed LR-GMP is inherently layer-agnostic and flexible to prompted layer selection, effectively modulating the layer-wise message passing for GNN learning.

\textbf{Parameter analysis.} As defined in Eq.(\ref{eq:LR-GMP}), the parameter $r$ is used to control the rank dimensions and thus determine the expressive capacity of LR-GMP.  In this subsection, we investigate the influence of different dimensions $r$ on the performance of LR-GMP method, as illustrated in Fig.~\ref{fig:k_r}.
Specifically, we can note that smaller rank $r$ can limit the learning capacity of the model while larger rank $r$ will introduce redundant parameters and increase the risk of overfitting, leading to performance degradation in most cases. Therefore, based on the observations in Fig.~\ref{fig:k_r}, we select $r$ from the range of $\{2,5,10\}$ to achieve a favorable trade-off between computational efficiency and desirable performance in our experiments.
\begin{figure}[!htp]
    \centering
    \includegraphics[width=0.24\textwidth]{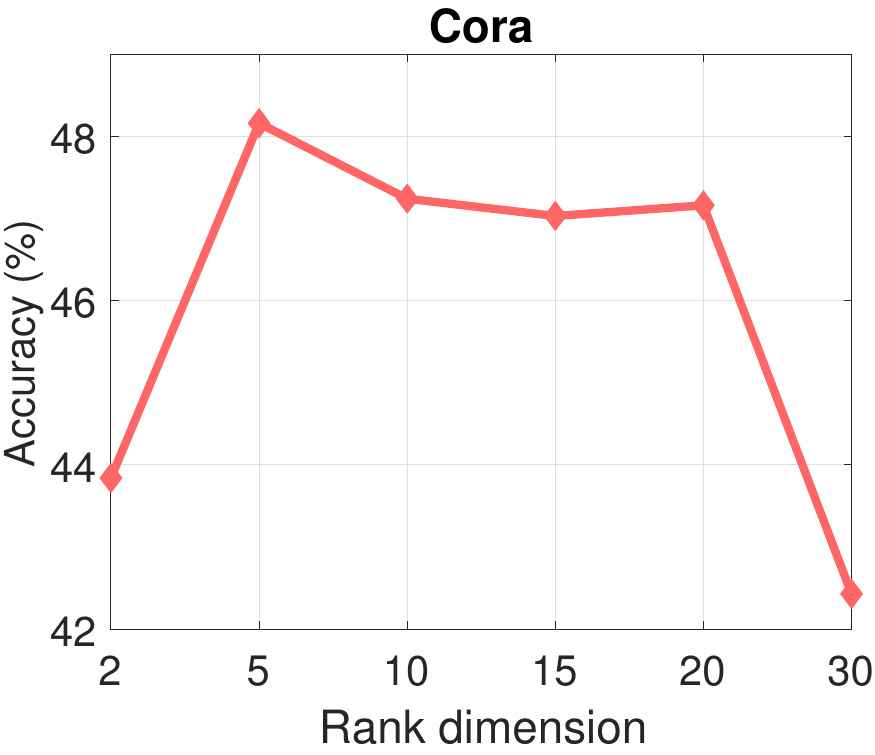}
    \includegraphics[width=0.24\textwidth]{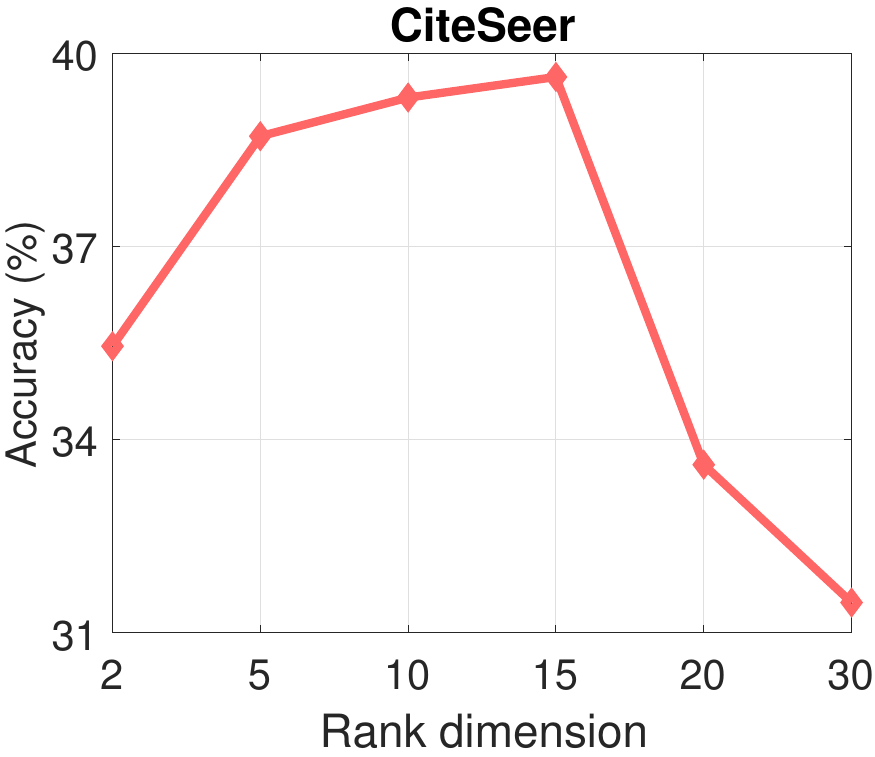}
     \\[6pt]
     \includegraphics[width=0.24\textwidth]{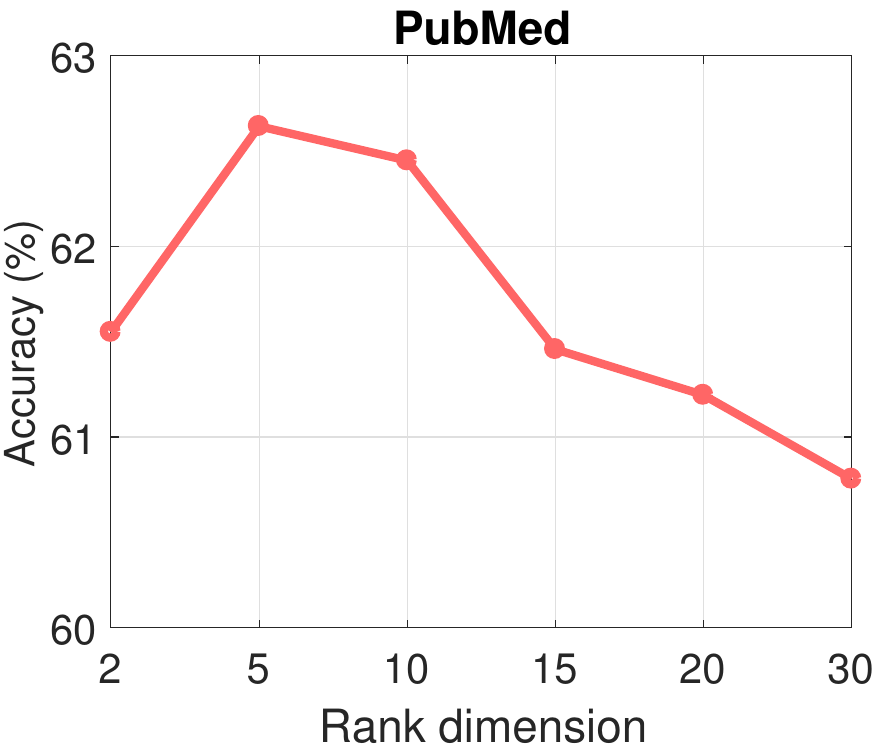}
    \includegraphics[width=0.24\textwidth]{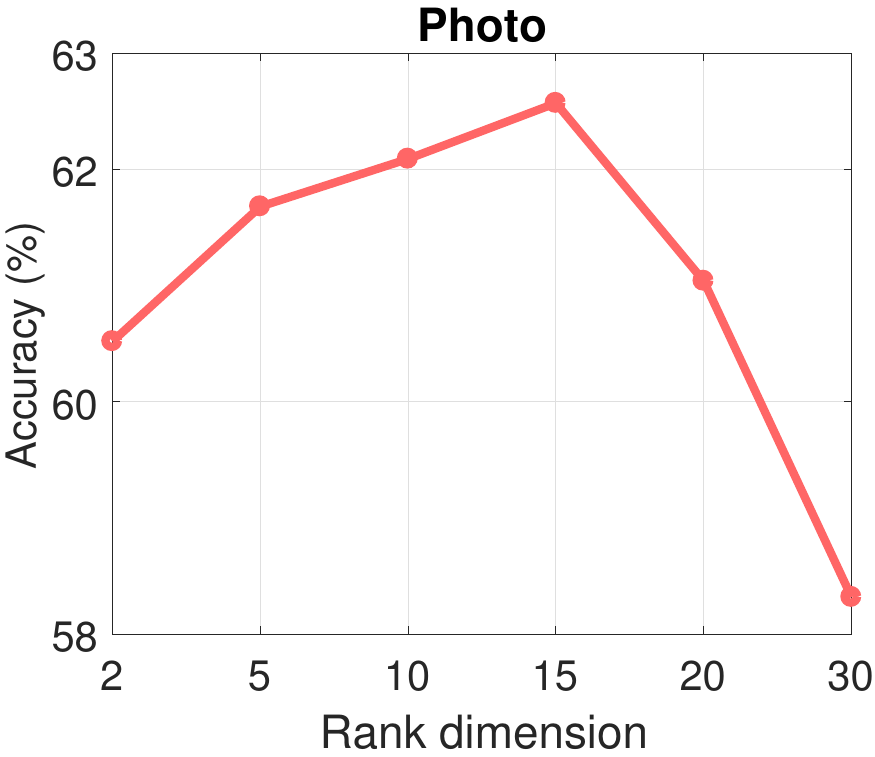}
    \caption{Results of LR-GMP with different rank dimensions $r$.}\label{fig:k_r}
\end{figure}

\section{Conclusions}
In this paper, we first revisit graph prompt learning and  reinterpret many existing graph data prompt models (GDPs) from Graph Message Prompt (GMP) perspective. We theoretically show that GMP scheme can unify many existing GDPs within a general message passing scheme. 
Based on this observation, we  then present a novel graph prompt learning approach, termed Low-Rank GMP (LR-GMP), to achieve a compact yet expressive prompt learning paradigm. 
Using LR-GMP, we can develop a general graph message prompt tuning framework for downstream task adaptation. Extensive experimental comparisons of various downstream tasks demonstrate the effectiveness, robustness and scalability of our proposed LR-GMP  on different graph learning datasets.  


\bibliography{ref}
\bibliographystyle{ieeetr}

\end{document}